\newtheorem{theorem}{Theorem}
\newtheorem{lemma}[theorem]{Lemma}
\newtheorem{cor}[theorem]{Corollary}
\newtheorem*{theorem*}{Theorem}
\newtheorem*{lemma*}{Lemma}
\newtheorem*{cor*}{Corollary}
\DeclareMathOperator*{\maj}{maj}
\newcommand{\real}{\mathbb{R}}
\newcommand{\E}{\mathbb{E}}
\newcommand{\dracolite}{\textsc{Detox}}
\newcommand{\bulyan}{\textsc{Bulyan}}
\newcommand{\multikrum}{\textsc{Multi-krum}}
\newcommand{\signum}{\textsc{sign}SGD}
\newcommand{\eg}{\textit{e.g.}}
\newcommand{\ie}{\textit{i.e.}}
\newcommand{\mA}{\mathcal{A}}
\newcommand{\hier}{\textsc{Hier-Aggr}}
\newcommand{\ra}[1]{\renewcommand{\arraystretch}{#1}}
\title{\dracolite{}: A Redundancy-based Framework for Faster and More Robust Gradient Aggregation}
\author{
	Shashank Rajput\thanks{Authors contributed equally to this paper and are listed alphabetically.}\\
	University of Wisconsin-Madison\\
	\texttt{rajput3@wisc.edu} \\
	\And
	Hongyi Wang\footnotemark[1]\\
	University of Wisconsin-Madison\\
	\texttt{hongyiwang@cs.wisc.edu} \\
	\And
	Zachary Charles\\
	University of Wisconsin-Madison\\
	\texttt{zcharles@wisc.edu} \\
	\And
	Dimitris Papailiopoulos\\
	University of Wisconsin-Madison\\
	\texttt{dimitris@papail.io} \\
}
\begin{document}
	
	\maketitle
	
	\begin{abstract}
	To improve the resilience of distributed  training to worst-case, or Byzantine node failures, several recent approaches have replaced gradient averaging with robust aggregation methods.
	Such techniques can have high computational costs, often quadratic in the number of compute nodes, and only have limited robustness guarantees. Other methods have instead used redundancy to guarantee robustness, but can only tolerate limited number of Byzantine failures.
	In this work, we present \dracolite{}, a Byzantine-resilient distributed training framework that combines algorithmic redundancy with robust aggregation. \dracolite{} operates in two steps, a filtering step that uses limited redundancy to significantly reduce the effect of Byzantine nodes, and a hierarchical aggregation step that can be used in tandem with any state-of-the-art robust aggregation method. We show theoretically that this leads to a substantial increase in robustness, and has a per iteration runtime that can be nearly linear in the number of compute nodes.
	We provide extensive experiments over real distributed setups across a variety of large-scale machine learning tasks, showing that \dracolite{} leads to orders of magnitude accuracy and speedup improvements over many state-of-the-art Byzantine-resilient approaches.
	\end{abstract}
	
	\setlength{\abovedisplayskip}{3pt}
	\setlength{\belowdisplayskip}{3pt}
	
\section{Introduction}\label{Sec:Intro}
To scale the training of machine learning models, gradient computations can often be distributed across multiple compute nodes. 
After computing these local gradients, a parameter server then averages them, and updates a global model. As the scale of data and available compute power grows, so does the probability that some compute nodes output unreliable gradients. This can be due to power outages, faulty hardware, or communication failures, or due to security issues, such as the presence of an adversary governing the output of a compute node. 

Due to the difficulty in quantifying these different types of errors separately, we often model them as Byzantine failures. Such failures are assumed to be able to result in any output, adversarial or otherwise. Unfortunately, the presence of a single Byzantine compute node can result in arbitrarily bad global models when aggregating gradients via their average \cite{blanchard17}.


In the context of distributed training, there have generally been two distinct approaches to improve Byzantine robustness. The first replaces the gradient averaging step at the parameter server with a robust aggregation step, such as the geometric median and variants thereof \cite{blanchard17,chen2017distributed,xie18, Damaskinos2019aggregathor, DBLP:journals/corr/abs-1806-05358, yin2018byzantine}. The second approach instead assigns each node redundant gradients, and uses this redundancy to eliminate the effect of Byzantine failures \cite{chen2018draco,data2018data,yu2018lagrange}.

Both of the above approaches have their own limitations. 
For the first, robust aggregators are typically expensive to compute and scale super-linearly (in many cases quadratically \cite{mhamdi2018hidden,Damaskinos2019aggregathor}) with the number of compute nodes.
Moreover, such methods often come with limited theoretical guarantees of Byzantine robustness (\eg, only establishing convergence in the limit, or only guaranteeing that the output of the aggregator has positive inner product with the true gradient \cite{blanchard17,mhamdi2018hidden}) and often require strong assumptions, such as bounds on the dimension of the model being trained.
On the other hand, redundancy or coding-theoretic based approaches offer strong guarantees of perfect receovery for the aggregated gradients. However, such approaches, in the worst-case, require each node to compute $\Omega(q)$ times more gradients, where $q$ is the number of Byzantine machines \cite{chen2018draco}. This overhead is prohibitive in settings with a large number of Byzantine machines.

\begin{figure}[H]
	\centering
	\begin{minipage}{.65\textwidth}
		\centering
		\includegraphics[width=\textwidth]{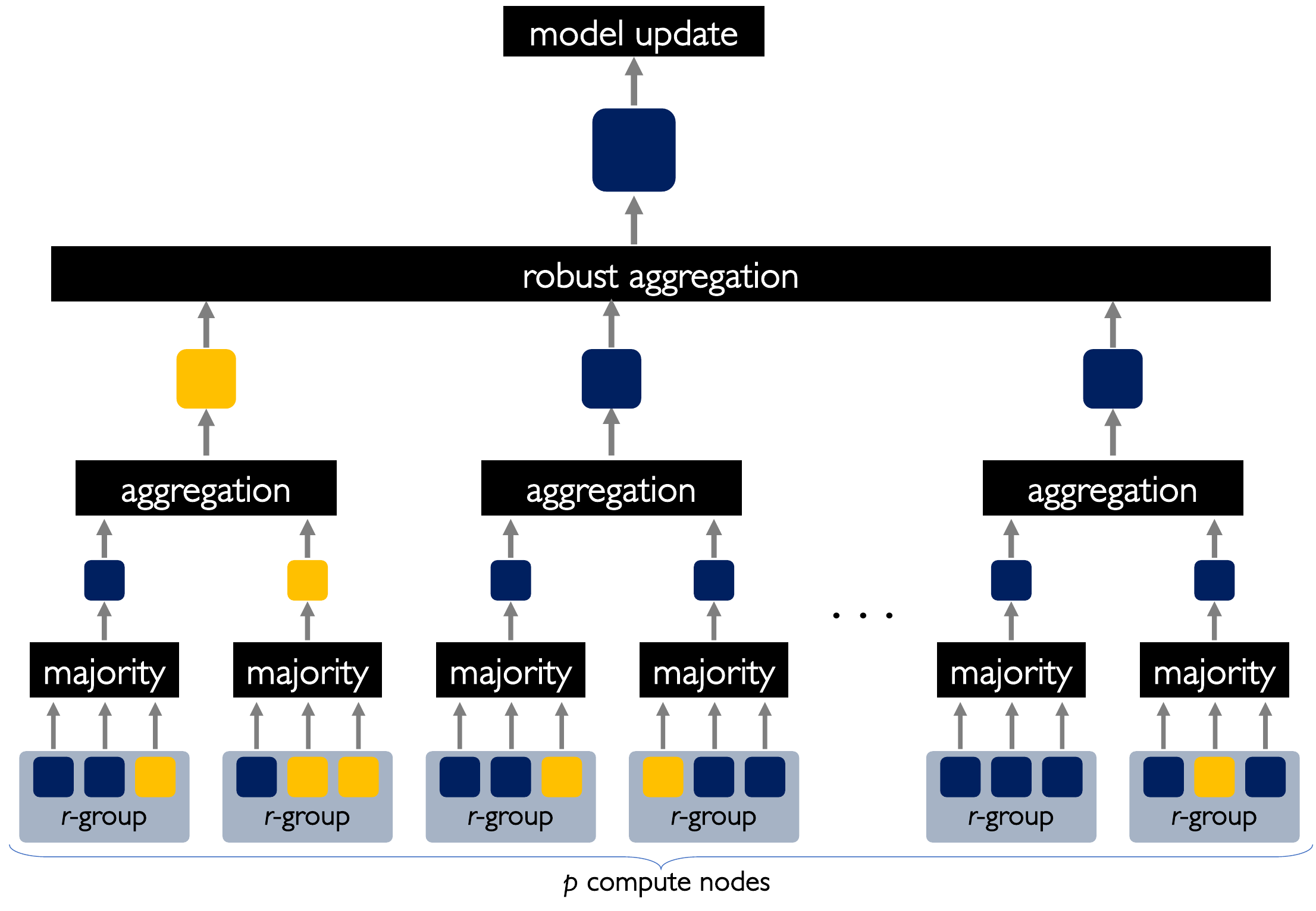}
		\caption{\dracolite{} is a hierarchical scheme for Byzantine gradient aggregation. In its first step, the parameter server partitions the compute nodes in groups and assigns each node to a group the same batch of data. After the nodes compute gradients with respect to this batch, the PS takes a majority vote of their outputs. This filters out a large fraction of the Byzantine gradients. In the second step, the parameter server partitions the filtered gradients in large groups, and applies a given aggregation method to each group. In the last step, the parameter server applies a robust aggregation method (\eg, geometric median) to the previous outputs. The final output is used to perform a gradient update step.} 
		\label{fig:detox}
	\end{minipage}%
	\hspace{0.03\textwidth}
	\begin{minipage}{.31\textwidth}
		\centering
		\includegraphics[width=\textwidth]{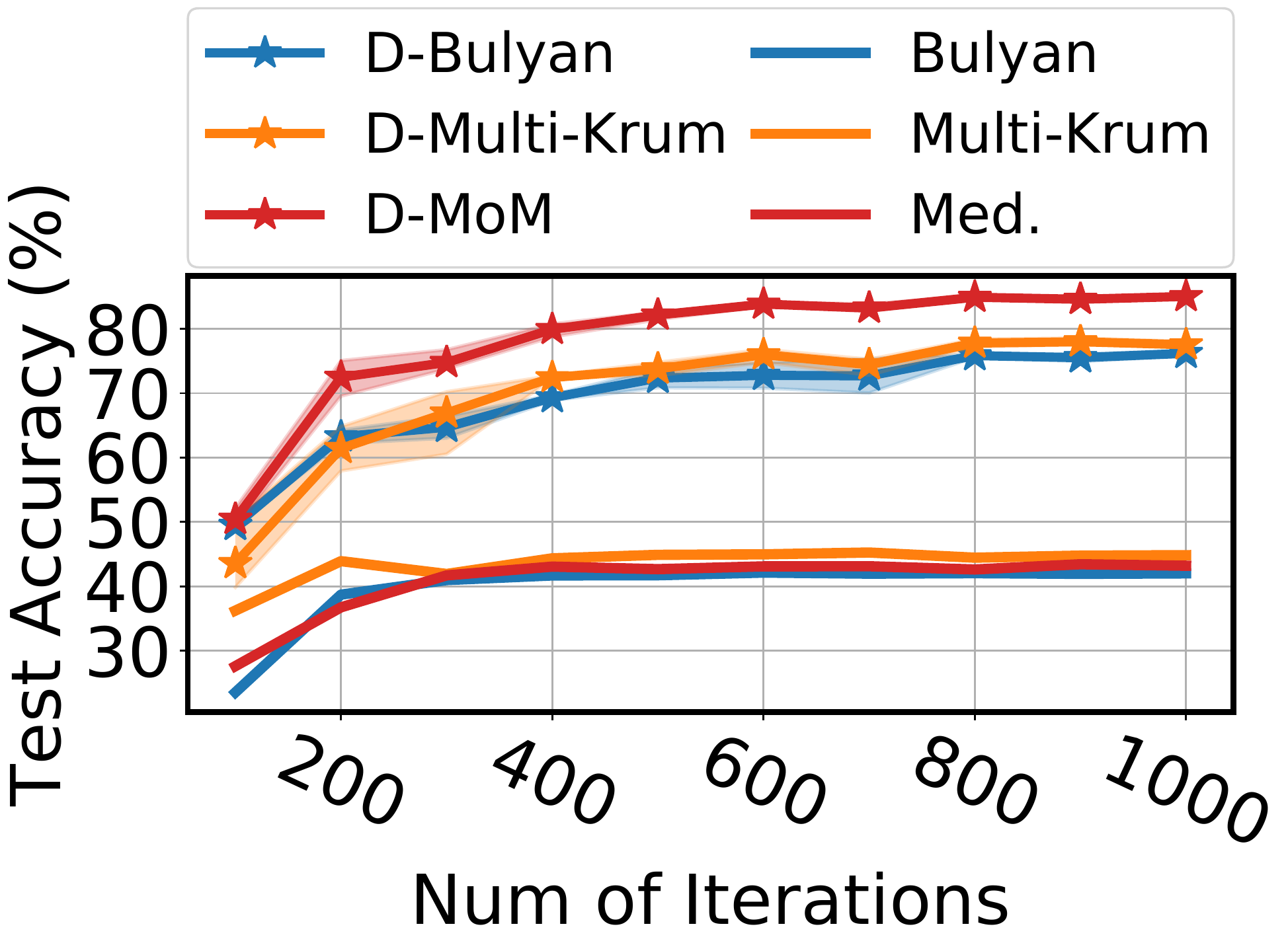}\\
		\includegraphics[width=\textwidth]{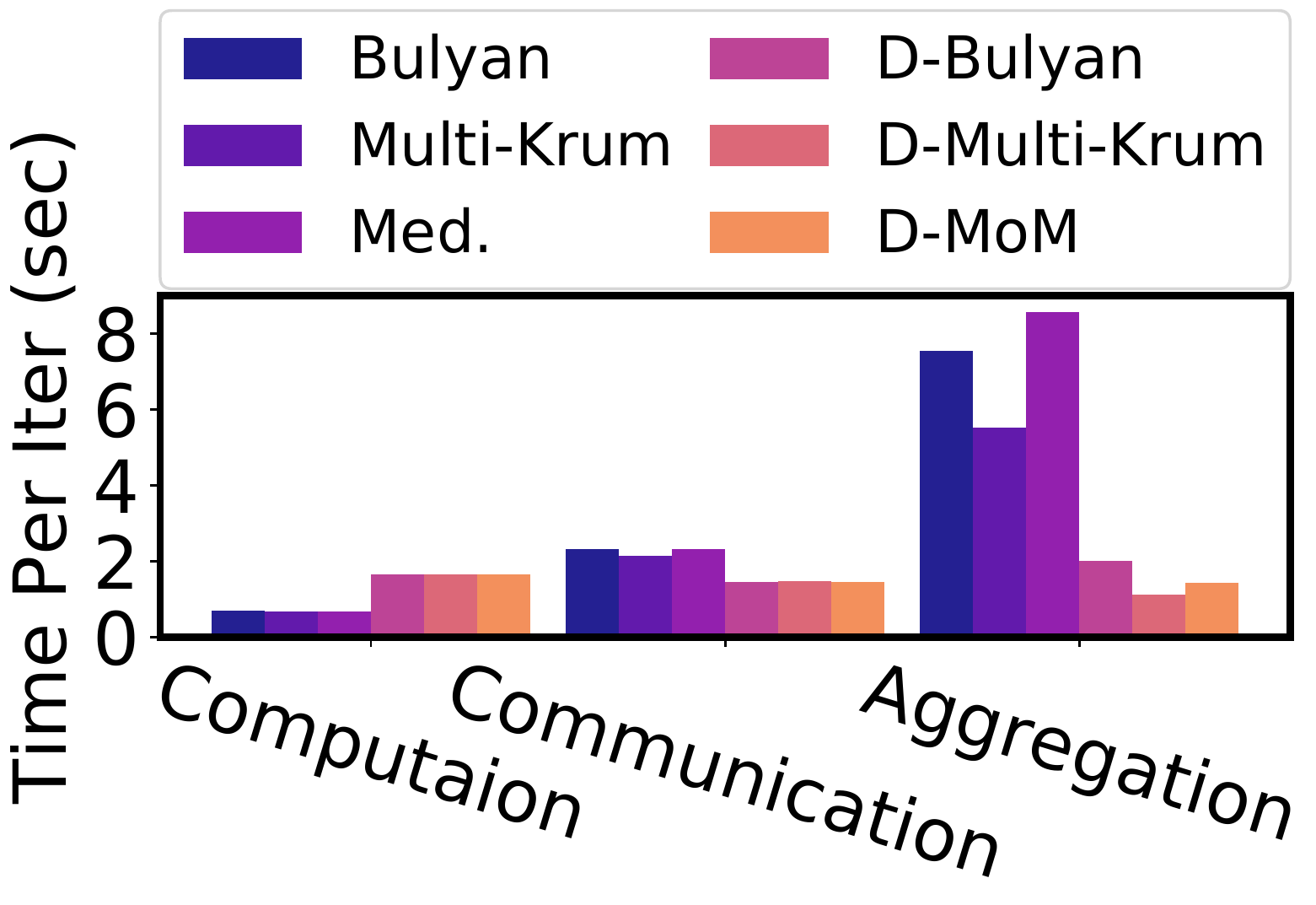}
		\caption{Top: convergence comparisons among various vanilla robust aggregation methods and the versions after deploying \dracolite{} under ``a little is enough" Byzantine attack \cite{baruch2019little}. Bottom: Per iteration runtime analysis of various methods. All results are for ResNet-18 trained on CIFAR-10. The prefix ``D-" stands for a robust aggregation method paired with \dracolite{}.} \label{fig:introLieConvResNet}
	\end{minipage}
\end{figure}

 \textbf{Our contributions.} In this work, we present \dracolite{}, a Byzantine-resilient distributed training framework that first uses computational redundancy to filter out almost all Byzantine gradients, and then performs a hierarchical robust aggregation method. \dracolite{} is scalable, flexible, and is designed to be used on top of any robust aggregation method to obtain improved robustness and efficiency. A high-level description of the hierarchical nature of \dracolite{} is given in Fig.~\ref{fig:detox}.
 
\dracolite{} proceeds in three steps. First the parameter server orders the compute nodes in groups of $r$ to compute the same gradients. While this step requires redundant computation at the node level, it will eventually allow for much faster computation at the PS level, as well as improved robustness.
After all compute nodes send their gradients to the PS, the PS takes the majority vote of each group of gradients.
We show that by setting $r$ to be logarithmic in the number of compute nodes, after the majority vote step only a constant number of Byzantine gradients are still present, even if the number of Byzantine nodes is a {\it constant fraction} of the total number of compute nodes.
\dracolite{} then performs hierarchical robust aggregation in two steps: First, it partitions the filtered gradients in a small number of groups, and aggregates them using simple techniques such as averaging. Second, it applies any robust aggregator (\eg, geometric median~\cite{chen2017distributed,yin2018byzantine}, Bulyan~\cite{mhamdi2018hidden}, Multi-Krum~\cite{Damaskinos2019aggregathor}, etc.) to the averaged gradients to further minimize the effect of any remaining traces of the original Byzantine gradients.

We prove that \dracolite{} can obtain {\it orders of magnitude} improved robustness guarantees compared to its competitors, and can achieve this at a nearly linear complexity in the number of compute nodes $p$, unlike methods like Bulyan~\cite{mhamdi2018hidden} that require run-time that is quadratic in $p$. 
We extensively test our method in real distributed setups and large-scale settings, showing that by combining \dracolite{} with previously proposed Byzantine robust methods, such as Multi-Krum, Bulyan, and coordinate-wise median, we increase the robustness and reduce the overall runtime of the algorithm. Moreover, we show that under strong Byzantine attacks, \dracolite{} can lead to almost a 40\% increase in accuracy over vanilla implementations of Byzantine-robust aggregation.
A brief performance comparison with some of the current state-of-the-art aggregators in shown in Fig.~\ref{fig:introLieConvResNet}.


\paragraph{Related work.} 
The topic of Byzantine fault tolerance has been extensively studied since the early 80s by Lamport et al. \cite{lamport1982byzantine}, and deals with worst-case, and/or adversarial failures, \eg, system crashes, power outages, software bugs, and adversarial agents that exploit security flaws. In the context of distributed
optimization, these failures are manifested through a subset of compute nodes returning to the master
flawed or adversarial updates. It is now well understood that first-order methods, such as gradient descent or mini-batch SGD, are not robust to Byzantine errors; even a single erroneous update can introduce arbitrary errors to the optimization variables.

Byzantine-tolerant ML has been extensively studied in recent years \cite{el2019sgd,xie2018zeno,xie2019fall,el2019fast, blanchard2017machine,chen2017distributed}, establishing that while average-based gradient methods are susceptible to adversarial nodes, median-based update methods can in some cases achieve better convergence, while being robust to some attacks. Although theoretical guarantees are provided in many works, the proposed algorithms in many cases only ensure a weak form of resilience against Byzantine failures, and often fail against strong Byzantine attacks \cite{mhamdi2018hidden}. A stronger form of Byzantine resilience is desirable for most of distributed machine learning applications. To the best of our knowledge, \textsc{Draco} \cite{chen2018draco} and \textsc{Bulyan} \cite{mhamdi2018hidden} are the only proposed methods that guarantee strong Byzantine resilience. However, as mentioned above, \textsc{Draco} requires heavy redundant computation from the compute nodes, while \textsc{Bulyan} requires heavy computation overhead on the parameter server end.

 We note that \cite{NIPS2018_7712} presents an alternative approach that does not fit easily under either category, but requires convexity of the underlying loss function. Finally, \cite{bernstein2018signsgd} examines the robustness of signSGD with a majority vote aggregation, but study a restricted Byzantine failure setup that only allows for a blind multiplicative adversary.

\section{Problem Setup}\label{section:redqe}
Our goal is to solve solve the following empirical risk minimization problem:
$$ \min_{w} F(w) := \frac{1}{n}\sum_{i=1}^n f_i(w)$$
where $w \in \real^d$ denotes the parameters of a model, and $f_i$ is the loss function on the $i$-th training sample.
 To approximately solve this problem, we often use mini-batch SGD. First, we initialize at some $w_0$. At iteration $t$, we sample $S_t$ uniformly at random from $\{1,\ldots, n\}$, and then update via
\begin{equation}\label{eq:mini_batch_sgd}
 w_{t+1} = w_t - \frac{\eta_t}{|S_t|}\sum_{i \in S_t} \nabla f_i(w_t),\end{equation}
where $S_t$ is a randomly selected subset of the $n$ data points.
To perform mini-batch SGD in a distributed manner, the global model $w_t$ is stored at a parameter server (PS) and updated according to (\ref{eq:mini_batch_sgd}), \ie, by using the mean of gradients that are evaluated at the compute nodes.

Let $p$ denote the total number of compute nodes. At each iteration $t$, during distributed mini-batch SGD, the PS broadcasts $w_t$ to each compute node. Each compute node is assigned $S_{i,t} \subseteq S_t$, and then evaluates the sum of gradients 
$$g_i = \sum_{j \in S_{i,t}}\nabla f_j(w_t).$$
The PS then updates the global model via
$$w_{t+1} = w_t-\frac{\eta_t}{p}\sum_{i=1}^p g_i.$$

We note that in our setup we assume that the parameter server is the owner of the data, and has access to the entire data set of size $n$.

\textbf{Distributed training with Byzantine nodes}~~We assume that a fixed subset $Q$ of size $q$ of the $p$ compute nodes are Byzantine. Let $\hat{g}_i$ be the output of node $i$. If $i$ is not Byzantine ($i \notin Q$), we say it is ``honest'', in which case its output $\hat{g}_i = g_i$ where $g_i$ is the true sum of gradients assigned to node $i$. If $i$ is Byzantine ($i \in Q$), its output $\hat{g}_i$ can be any $d$-dimensional vector. The PS receives $\{ \hat{g}_i\}_{i=1}^p$, and can then process these vectors to produce some approximation to the true gradient update in \eqref{eq:mini_batch_sgd}.

We make no assumptions on the Byzantine outputs. In particular, we allow adversaries with full information about $F$ and $w_t$, and that the byzantine compute nodes can collude. Let $\epsilon = q/p$ be the fraction of Byzantine nodes. We will assume $\epsilon < 1/2$ throughout.

\section{\dracolite{}: A Redundancy Framework to Filter most Byzantine Gradients}
We now describe \dracolite{}, a framework for Byzantine-resilient mini-batch SGD with $p$ nodes, $q$ of which are Byzantine. Let $b \geq p$ be the desired batch-size, and let $r$ be an odd integer. We refer to $r$ as the {\it redundancy ratio}. For simplicity, we will assume $r$ divides $p$ and that $p$ divides $b$. \dracolite{} can be directly extended to the setting where this does not hold. 

\dracolite{} first computes a random partition of $[p]$ in $p/r$ node groups $A_1,\ldots ,A_{p/r}$ each of size $r$. This will be fixed throughout. We then initialize at some $w_0$. For $t \geq 0$, we wish to compute some approximation to the gradient update in \eqref{eq:mini_batch_sgd}. To do so, we need a Byzantine-robust estimate of the true gradient. Fix $t$, and let us suppress the notation $t$ when possible. As in mini-batch SGD, let $S$ be a subset of $[n]$ of size $b$, with each element sampled uniformly at random from $[n]$. We then partition of $S$ in groups $S_{1},\ldots, S_{p/r}$ of size $br/p$.
For each $i \in A_j$, the PS assigns node $i$ the task of computing
\begin{equation}\label{eq:g_j_detox}
 g_{j} := \frac{1}{|S_{j}|}\sum_{k \in S_{j}} \nabla f_k(w) = \frac{p}{rb}\sum_{k \in S_{j}} \nabla f_k(w).
\end{equation}
If $i$ is an honest node, then its output is $\hat{g}_{i} = g_{j}$, while if $i$ is Byzantine, it outputs some $d$-dimensional $\hat{g}_{i}$. The $\hat{g}_{i}$ are then sent to the PS. The PS then computes 
$$z_{j} := \maj(\{\hat{g}_{i} | i \in A_j\}),$$ 
where $\maj$ denotes the majority vote. If there is no majority, we set $z_{j} = 0$. We will refer to $z_j$ as the ``vote'' of group $j$.

Since some of these votes are still Byzantine, we must do some robust aggregation of the vote. We employ a hierarchical robust aggregation process \hier{}, which uses two user-specified aggregation methods $\mA_0$ and $\mA_1$. First, the votes are partitioned in to $k$ groups. Let $\hat{z}_1,\ldots, \hat{z}_k$ denote the output of $\mA_0$ on each group. The PS then computes $\hat{G} = \mA_1(\hat{z}_1,\ldots, \hat{z}_k)$ and updates the model via $w = w - \eta \hat{G}$. This hierarchical aggregation resembles a median of means approach on the votes \cite{minsker2015geometric}, and has the benefit of improved robustness and efficiency. We discuss this in further detail in Section \ref{sec:speed_robust}.

A description of \dracolite{} is given in Algorithm \ref{alg:1}.



	\begin{algorithm}
	    \caption{\dracolite{}: Algorithm to be performed at the parameter server}
	    \begin{algorithmic}[1]\label{alg:1}
	    \INPUT Batch size $b$, redundancy ratio $r$, compute nodes $1,\ldots, p$, step sizes $\{\eta_t\}_{t \geq 0}$.
	    \STATE Randomly partition $[p]$ in  ``node groups'' $\{A_j | 1 \leq j \leq p/r\}$ of size $r$.
	    \FOR{$t=0$ {\bfseries to} $T$}
	        \STATE Draw $S_t$ of size $b$ randomly from $[n]$.
	        \STATE Partition $S_t$ in to groups $\{S_{t,j} | 1 \leq j \leq p/r\}$ of size $rb/p$.
	        \STATE For each $j \in [p/r], i \in A_j$, push $w_t$ and $S_{t,j}$ to compute node $i$.
	        \STATE Receive the (potentially Byzantine) $p$ gradients $\hat{g}_{t,i}$ from each node. 
	        \STATE Let $z_{t,j} := \maj(\{ \hat{g}_{t,i} | i \in A_j\})$, and 0 if no majority exists. \hfill \%Filtering step
	        \STATE Set $\hat{G}_t = \hier{}(\{z_{t,1},\ldots, z_{t,p/r}\})$. \hfill \%Hierarchical aggregation
	        \STATE Set $w_{t+1} = w_t - \eta \hat{G}_t$. \hfill \%Gradient update
	    \ENDFOR
	  \end{algorithmic}
	\end{algorithm}




	\begin{algorithm}
	    \caption{\hier{}: Hierarchical aggregation}
	    \begin{algorithmic}[1]\label{alg:2}
	    \INPUT Aggregators $\mA_0,\mA_1$, votes $\{z_1,\dots,z_{p/r}\}$, vote group size $k$.
	    \STATE Let $\hat{p}:=p/r$.
	    \STATE Randomly partition $\{z_1,\dots,z_{\hat{p}}\}$ in to ``vote groups'' $\{Z_j | 1 \leq j \leq \hat{p}/k\}$ of size $k$.
	    \STATE For each vote group $Z_j$, calculate $\hat{z}_j = \mA_0(Z_j)$.
	    \STATE Return $\mA_1(\{\hat{z}_1,\dots, \hat{z}_{\hat{p}/k}\})$.
	  \end{algorithmic}
	\end{algorithm}


\subsection{Filtering out Almost Every Byzantine Node}\label{section:reduceByzantine}

	We now show that \dracolite{} filters out the vast majority of Byzantine gradients. Fix the iteration $t$. Recall that all honest nodes in a node group $A_j$ send $\hat{g}_j = g_j$ as in \eqref{eq:g_j_detox} to the PS. If $A_j$ has more honest nodes than Byzantine nodes then $z_j = g_j$ and we say $z_j$ is honest. If not, then $z_j$ may not equal $g_j$ in which case $z_j$ is a Byzantine vote. Let $X_j$ be the indicator variable for whether block $A_j$ has more Byzantine nodes than honest nodes, and let $\hat{q} = \sum_{j} X_j$. This is the number of Byzantine votes. By filtering, \dracolite{} goes from a Byzantine compute node ratioof $\epsilon = q/p$ to a Byzantine vote ratio of $\hat{\epsilon} = \hat{q}/\hat{p}$ where $\hat{p} = p/r$. 

	We first show that $\E[\hat{q}]$ decreases {\it exponentially} with $r$, while $\hat{p}$ only decreases linearly with $r$. That is, by incurring a constant factor loss in compute resources, we gain an exponential improvement in the reduction of byzantine nodes. 
	 Thus, even small $r$ can drastically reduce the Byzantine ratio of votes. This observation will allow us to instead use robust aggregation methods on the $z_j$, \ie, the votes, greatly improving our Byzantine robustness. We have the following theorem about $\E[\hat{q}]$. All proofs can be found in the appendix. Note that throughout, we did not focus on optimizing constants.


	\begin{theorem}\label{lemma:2}
	There is a universal constant $c$ such that if the fraction of Byzantine nodes is $\epsilon < c$, then the effective number of Byzantine votes after filtering becomes $$\E[\hat{q}]=\mathcal{O}\left(\epsilon^{(r-1)/2}q/r\right).$$
	\end{theorem}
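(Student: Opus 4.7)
The plan is a two-step reduction: first collapse the expectation over groups to a single-group tail by exchangeability, then bound that tail via a clean factorial-moment estimate for the hypergeometric distribution.

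First, I would write $\hat q = \sum_{j=1}^{\hat p} X_j$ with $X_j = \mathbf{1}\{|A_j\cap Q|\ge (r+1)/2\}$. Because the partition of $[p]$ into equal-size groups is chosen uniformly at random, each group has the same marginal distribution, so linearity of expectation gives $\E[\hat q] = \hat p\,\Pr[X_1=1] = (p/r)\,\Pr[X_1=1]$. The problem is thus reduced to bounding the probability that a single uniformly random $r$-subset of $[p]$ contains a Byzantine majority.

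Next, letting $B_1 := |A_1\cap Q|\sim \mathrm{Hyp}(p,q,r)$, I would apply Markov's inequality to the integer-valued random variable $\binom{B_1}{(r+1)/2}$, which is at least $1$ precisely when $X_1=1$. A standard factorial-moment identity gives
\[
\Pr[X_1=1]\;\le\; \E\!\left[\binom{B_1}{(r+1)/2}\right] \;=\; \frac{\binom{q}{(r+1)/2}\binom{r}{(r+1)/2}}{\binom{p}{(r+1)/2}} \;\le\; \binom{r}{(r+1)/2}\,\epsilon^{(r+1)/2},
\]
where the closed form comes from counting all-Byzantine $(r+1)/2$-subsets of $A_1$ (a fixed such subset lies in $A_1$ with probability $\binom{r}{(r+1)/2}/\binom{p}{(r+1)/2}$) and the last inequality uses $\binom{q}{k}/\binom{p}{k}\le(q/p)^k=\epsilon^k$. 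A more pedestrian alternative is to dominate $B_1$ stochastically by $\mathrm{Bin}(r,\epsilon)$ and bound the binomial tail by a geometric series, which yields the same bound up to an absolute constant.

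To finish, bound $\binom{r}{(r+1)/2}\le 2^r$ and rewrite $2^r\epsilon^{(r+1)/2} = \epsilon\cdot(4\epsilon)^{(r-1)/2}$. Choosing the universal constant $c$ in the hypothesis strictly below $1/4$ ensures $4\epsilon < 1$, so the base of the exponential in $r$ is bounded away from $1$; plugging back and using $q=\epsilon p$ yields $\E[\hat q] \le (p/r)\cdot\epsilon\cdot(4\epsilon)^{(r-1)/2} = \mathcal{O}(\epsilon^{(r-1)/2}\,q/r)$, as claimed. The main obstacle is precisely the combinatorial blow-up $\binom{r}{(r+1)/2}\sim 2^r/\sqrt r$: this is what forces the hypothesis $\epsilon<c$ with a specific small absolute constant $c$ (rather than the more natural $\epsilon<1/2$), so that the effective base $4\epsilon$ of the exponential in $r$ is genuinely below $1$ and the hidden constant in the $\mathcal{O}$ absorbs the constant change of base from $\epsilon$ to $4\epsilon$.
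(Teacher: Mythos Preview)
Your argument is correct and in fact cleaner than the paper's. Both proofs start the same way, reducing by exchangeability to the single-group tail $\Pr[B_1\ge (r+1)/2]$ for $B_1\sim\mathrm{Hyp}(p,q,r)$. From there the paper writes out the hypergeometric tail explicitly as $\sum_{i=0}^{(r-1)/2}\binom{q}{r-i}\binom{p-q}{i}/\binom{p}{r}$, bounds the sum by $(r+1)/2$ copies of the boundary term $\binom{q}{(r+1)/2}\binom{p-q}{(r-1)/2}/\binom{p}{r}$, and then applies crude estimates of the form $\binom{q}{m}\le q^m/m!$ and $\binom{p}{r}\ge (p-r)^r/r!$ together with $(1-r/p)^r\ge 1/2$ (which forces the side condition $p\ge 2r$); this yields $\E[\hat q]\le (2q/r)\bigl(40\epsilon(1-\epsilon)\bigr)^{(r-1)/2}$ and hence $c=1/40$. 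Your factorial-moment route bypasses the sum entirely: the identity $\E\bigl[\binom{B_1}{k}\bigr]=\binom{q}{k}\binom{r}{k}/\binom{p}{k}$ followed by $\binom{q}{k}/\binom{p}{k}\le\epsilon^k$ and $\binom{r}{k}\le 2^r$ gives $\E[\hat q]\le (2q/r)(4\epsilon)^{(r-1)/2}$, so $c$ can be taken just below $1/4$ and no extra conditions on $p$ or $r$ are needed. (There is a harmless factor-of-$2$ slip in your rewriting: $2^r\epsilon^{(r+1)/2}=2\epsilon\,(4\epsilon)^{(r-1)/2}$, not $\epsilon\,(4\epsilon)^{(r-1)/2}$.) One small caution on your aside: $B_1$ is not literally stochastically dominated by $\mathrm{Bin}(r,\epsilon)$ since the two share the same mean; what is true, and is all you need, is that the hypergeometric \emph{upper} tail is bounded by the binomial one (Hoeffding's sampling-without-replacement comparison).
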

We now wish to use this to derive high probability bounds on $\hat{q}$.
While the variables $X_i$ are not independent, they are negatively correlated. By using a version of Hoeffding's inequality for weakly dependent variables,
we can show that if the redundancy is logarithmic, \ie, $r \approx \log(q)$, then with high probability the number of effective byzantine votes drops to a constant, \ie, $\hat{q} = \mathcal{O}(1)$.
\begin{cor}\label{lemma:5}There is a constant $c$ such that if and $\epsilon \leq c$ and $r\geq3 + 2\log_2(q) $ then for any $\delta \in (0,\frac{1}{2})$, with probability at least $1-\delta$, we have that $\hat{q} \leq 1+2\log(1/\delta)$.
\end{cor}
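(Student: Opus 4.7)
The plan is to bound $\hat q$ in two stages: first, use Theorem~\ref{lemma:2} to pin down its mean; second, convert to a high-probability bound via a Chernoff-type tail inequality for negatively associated Bernoullis.

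For the mean, plugging $r \geq 3 + 2\log_2 q$ into Theorem~\ref{lemma:2} gives $(r-1)/2 \geq 1 + \log_2 q$, and since $\epsilon \leq 1/2$,
$$ \epsilon^{(r-1)/2} \;\leq\; \epsilon \cdot (1/2)^{\log_2 q} \;=\; \epsilon/q. $$
Substituting into $\E[\hat q] = O(\epsilon^{(r-1)/2} q / r)$ yields $\mu := \E[\hat q] = O(\epsilon/r)$. Since $r \geq 3$, taking the constant $c$ in the hypothesis $\epsilon \leq c$ sufficiently small forces $\mu \leq 1/e^2$ (any small absolute constant would do).

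For concentration, the $X_j$ are not independent, because $[p]$ is partitioned without replacement, but they are negatively associated. Writing $I_{ij} = \mathbf{1}[i \in A_j]$, the cell-occupancy indicators of a uniform partition into equal-sized blocks are NA by the classical balls-and-bins result of Dubhashi--Ranjan; each $X_j$ is a monotone function of $\{I_{ij} : i \in Q\}$ alone, and these index sets are disjoint across $j$, so the $X_j$ are NA. The multiplicative Chernoff bound transfers verbatim to NA Bernoullis, giving $\Pr[\hat q \geq k] \leq (e\mu/k)^k$ for $k \geq \mu$. Setting $k = 1 + 2\log(1/\delta) \geq 1 > \mu$, the bound $\mu \leq 1/e^2$ implies $k/(e\mu) \geq ek \geq e$, so $\log(k/(e\mu)) \geq 1$ and
$$ \Pr[\hat q \geq k] \;\leq\; e^{-k} \;=\; e^{-1} \delta^2 \;\leq\; \delta, $$
where the last inequality uses $\delta < 1/2 < e$. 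This yields $\hat q \leq 1 + 2\log(1/\delta)$ with probability at least $1-\delta$.

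The main obstacle is the concentration step: justifying a Chernoff-type tail despite the dependence induced by sampling without replacement. The NA route above is the cleanest I know; if one prefers to avoid invoking Dubhashi--Ranjan, a fallback is a Hoeffding-type inequality for permutation distributions, which the sentence preceding the corollary hints at. Everything else is a constant-tuning exercise: choose $c$ small enough that $\mu \leq 1/e^2$, then verify the elementary inequality $e^{-1}\delta^2 \leq \delta$ for $\delta < 1/2$.
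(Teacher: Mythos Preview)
Your strategy matches the paper's: use Theorem~\ref{lemma:2} to force $\E[\hat q]$ below a small absolute constant, then upgrade to a tail bound by exploiting negative dependence of the $X_j$. The paper packages the concentration step through an inequality of Linial together with the direct estimate $\mathbb{P}\bigl[\bigwedge_{j\in A}(X_j=1)\bigr]\le (\E[X_1])^{|A|}$, argued via the same ``the Byzantine pool depletes faster than the honest one'' intuition you invoke; you instead route through the NA framework and multiplicative Chernoff. These are two wrappings of the same idea, and your final arithmetic is clean.

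There is one technical slip in your NA justification: the family $\{I_{ij}:i\in Q,\ j\in[p/r]\}$ is \emph{not} negatively associated. For $i\neq i'$ and $j\neq j'$ one computes $\mathbb{P}[i'\in A_{j'}\mid i\in A_j]=r/(p-1)>r/p=\mathbb{P}[i'\in A_{j'}]$, a strictly positive correlation, so the Dubhashi--Ranjan cell-occupancy result does not deliver what you claim (that result is for independent throws, not for a uniform equipartition). The conclusion that the $X_j$ are NA is nevertheless correct, via a shorter route: set $B_j=|A_j\cap Q|$; then $(B_1,\ldots,B_{p/r})$ has the law of independent $\mathrm{Binomial}(r,\theta)$ variables conditioned on their sum equalling $q$, hence NA by Joag-Dev and Proschan, and $X_j=\mathbf{1}[B_j>r/2]$ is an increasing function of $B_j$ alone. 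With this repair your Chernoff step goes through unchanged and the proof is complete.
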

In the next section, we exploit this dramatic reduction of Byzantine votes to derive strong robustness guarantees for \dracolite{}.

\section{\dracolite{} Improves the Speed and Robustness of Robust Estimators}\label{sec:speed_robust}

	Using the results of the previous section, if we set the redundancy ratio to $r \approx \log(q)$, the filtering stage of \dracolite{} reduces the number of Byzantine votes $\hat{q}$ to roughly a constant. While we could apply some robust aggregator $\mA$ directly to the output votes of the filtering stage, such methods often scale poorly with the number of votes $\hat{p}$. By instead applying \hier, we greatly improve efficiency and robustness. Recall that in \hier, we partition the votes into $k$ ``vote groups'', apply some $\mA_0$ to each group, and apply some $\mA_1$ to the $k$ outputs of $\mA_0$. We analyze the case where $k$ is roughly constant, $\mA_0$ computes the mean of its inputs, and $\mA_1$ is a robust aggregator. In this case, \hier~is analogous to the Median of Means (MoM) method from robust statistics \cite{minsker2015geometric}.

	\paragraph{Improved speed.}

	Suppose that without redundancy, the time required for the compute nodes to finish is $T$. Applying Krum \cite{blanchard17}, Multi-Krum \cite{Damaskinos2019aggregathor}, and Bulyan \cite{mhamdi2018hidden} to their $p$ outputs requires $\mathcal{O}(p^2d)$ operations, so their overall runtime is $\mathcal{O}(T+p^2d)$. In \dracolite{}, the compute nodes require $r$ times more computation to evaluate redundant gradients. If $r \approx \log(q)$, this can be done in $\mathcal{O}(\ln(q)T)$. With \hier~as above, \dracolite{} performs three major operations: (1) majority voting, (2) mean computation of the $k$ vote groups and (3) robust aggregation of the these $k$ means using $\mA_1$. (1) and (2) require $\mathcal{O}(pd)$ time. For practical $\mA_1$ aggregators, including Multi-Krum and Bulyan, (3) requires $\mathcal{O}(k^2d)$ time. Since $k\ll p$, \dracolite{} has runtime $\mathcal{O}(\ln(q)T+pd)$. If $T = \mathcal{O}(d)$ (which generally holds for gradient computations), Krum, Multi-Krum, and Bulyan require $\mathcal{O}(p^2d)$ time, but \dracolite{} only requires $\mathcal{O}(pd)$ time. Thus, \dracolite{} can lead to significant speedups, especially when the number of workers is large.

	\paragraph{Improved robustness.}

	To analyze robustness, we first need some distributional assumptions. At any given iteration, let $G$ denote the full gradient of $F(w)$. Throughout this section, we assume that the gradient of each sample is drawn from a distribution $\mathcal{D}$ on $\mathbb{R}^d$ with mean $G$ and variance  $\sigma^2$. In \dracolite{}, the ``honest'' votes $z_i$ will also have mean $G$, but their variance will be $\sigma^2p/rb$. This is because each honest compute node gets a sample of size $rb/p$, so its variance is reduced by a factor of $rb/p$.

	Suppose $\hat{G}$ is some approximation to the true gradient $G$. We say that
	$\hat{G}$ is a $\Delta$-inexact gradient oracle for $G$ if $\|\hat{G} - G\|\leq \Delta$.
	\cite{DBLP:journals/corr/abs-1806-05358} shows that access to a $\Delta$-inexact gradient oracle is sufficient to upper bound the error of a model $\hat{w}$ produced by performing gradient updates with $\hat{G}$. To bound the robustness of an aggegator, it suffices to bound $\Delta$. Under the distributional assumptions above, we will derive bounds on $\Delta$ for the hierarchical aggregator $\mA$ with different base aggregators $\mA_1$.

	We will analyze \dracolite{} when $\mA_0$ computes the mean of the vote groups, and $\mA_1$ is geometric median, coordinate-wise median, or $\alpha$-trimmed mean \cite{yin2018byzantine}. We will denote the approximation $\hat{G}$ to $G$ computed by \dracolite{}~in these three instances by $\hat{G}_1, \hat{G}_2$ and $\hat{G}_3$, respectively. Using the proof techniques in \cite{minsker2015geometric}, we get the following.

	\begin{theorem}\label{prop:MoM}
	Assume $r\geq 3 + 2 \log_2(q)$ and $\epsilon \leq c$ where $c$ is the constant from Corollary \ref{lemma:5}. There are constants $c_1, c_2, c_3$ such that for all $\delta \in (0,1/2)$, with probability at least $1-2\delta$:
	\begin{enumerate}
		\item If $k=128\ln(1/\delta)$, then $\hat{G}_1$ is a $c_1\sigma\sqrt{\frac{\ln(1/\delta)}{b}}$-inexact gradient oracle.
		\item If $k=128\ln(d/\delta)$, then $\hat{G}_2$ is a $c_2\sigma\sqrt{\frac{\ln(d/\delta)}{b}}$-inexact gradient oracle.
		\item If $k=128\ln(d/\delta)$ and $\alpha = \frac{1}{4}$, then $\hat{G}_3$ is a $c_3\sigma\sqrt{\frac{\ln(d/\delta)}{b}}$-inexact gradient oracle.
	\end{enumerate}
	\end{theorem}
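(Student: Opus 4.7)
The plan is to adapt the Median of Means (MoM) analysis of \cite{minsker2015geometric} to our hierarchical setting, exploiting the fact that after the filtering step the number of Byzantine votes has already been driven down to a poly-logarithmic constant by Corollary \ref{lemma:5}. Throughout, condition on the event (of probability at least $1-\delta$) that $\hat{q} \leq 1 + 2\log(1/\delta)$, which gives us the first factor of $\delta$ in the probability statement. The remaining $\delta$ will come from the concentration of the vote-group means.

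Fix $k$ vote groups, each of size $\hat{p}/k$. First I would count corrupted groups: since at most $\hat{q}$ groups can contain a Byzantine vote, and by choice of $k = 128\ln(1/\delta)$ (resp.\ $128\ln(d/\delta)$) we have $\hat{q}/k \leq 1/32$ (for $\delta$ small enough and possibly adjusting universal constants), so at most a tiny constant fraction of the $k$ group means $\hat{z}_j$ are corrupted. For every clean group $j$, the mean $\hat{z}_j$ is an average of $(\hat{p}/k)\cdot(rb/p) = b/k$ i.i.d.\ samples from $\mathcal{D}$, hence $\E[\hat{z}_j] = G$ and $\E\|\hat{z}_j - G\|^2 \leq \sigma^2 k/b$. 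Chebyshev then gives $\Pr[\|\hat{z}_j - G\| > t_1] \leq 1/4$ with $t_1 = 2\sigma\sqrt{k/b}$, so each clean group mean is ``good'' with probability at least $3/4$. Since clean group means are independent, a Chernoff bound on the number of good clean groups, combined with $k = \Theta(\log(1/\delta))$, shows that with probability $1-\delta$ at least a $2/3$ fraction of clean group means are good. Combined with the corrupted-group bound, at least a $(1/2 + \gamma)$-fraction of all $k$ group means lie in a ball of radius $t_1$ around $G$, for some absolute constant $\gamma > 0$.

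Finally, I would plug this ``most-means-are-clustered'' event into the robustness guarantee of $\mathcal{A}_1$. For the geometric median case, Minsker's Theorem 2.1 gives $\|\hat{G}_1 - G\| \leq C_\gamma t_1 = c_1 \sigma\sqrt{\ln(1/\delta)/b}$, which establishes (1). For the coordinate-wise median ($\hat{G}_2$) and the $\alpha$-trimmed mean with $\alpha = 1/4$ ($\hat{G}_3$), I would run the same MoM argument coordinate by coordinate, using the coordinate-wise variance bound $\E[(\hat{z}_j - G)_\ell^2] \leq \sigma_\ell^2 k/b$ with $\sum_\ell \sigma_\ell^2 \leq \sigma^2$. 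Because each of the $d$ coordinates must be controlled simultaneously, the Chernoff step requires a union bound over $d$, which is why $k$ must scale with $\ln(d/\delta)$; this yields the $\sqrt{\ln(d/\delta)/b}$ rates for $\hat{G}_2, \hat{G}_3$. The subtle obstacle is the last step: one must verify that the one-dimensional robust estimators satisfy a ``majority-of-coordinates-clustered implies closeness'' guarantee with constants independent of $d$ (so that only the $\log d$ factor appears), and that the independence structure needed for the coordinate-wise Chernoff bound survives even in the presence of the adversarially chosen $\hat{q}$ Byzantine votes---handled by conditioning on the realized filtering outcome before invoking the concentration inequalities.
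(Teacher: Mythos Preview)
Your proposal is correct and follows essentially the same route as the paper: condition on the event of Corollary~\ref{lemma:5}, compute the variance $\sigma^2 k/b$ of each clean vote-group mean, apply Chebyshev to each clean group, use a Hoeffding/Chernoff bound on the number of ``good'' groups (with $k=\Theta(\log(1/\delta))$), and then invoke Minsker's geometric-median lemma for part~(1) and a coordinate-wise version with a union bound over $d$ for parts~(2) and~(3). The only differences are cosmetic choices of constants (the paper takes $t_1=4\sigma\sqrt{k/b}$ so that Chebyshev gives failure probability $1/16$ rather than $1/4$, and uses Hoeffding rather than Chernoff), and the paper handles the $\alpha$-trimmed mean by a direct coordinate-wise argument rather than reducing it to the median case.
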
	

	The above theorem has three important implications. First, we can derive robustness guarantees for \dracolite{} that are virtually independent of the Byzantine ratio $\epsilon$. Second, even when there are no Byzantine machines, it is known that no aggregator can achieve $\Delta = o(\sigma/\sqrt{b})$~\cite{lugosi2019sub}, and because we achieve $\Delta=\tilde{O}(\sigma/\sqrt{b})$, we cannot expect to get an order of better robustness by any other aggregator. Third, other than a logarithmic dependence on $q$, there is no dependence on the number of nodes $p$. Even as $p$ and $q$ increase, we still maintain roughly the same robustness guarantees.

	By comparison, the robustness guarantees of Krum and Geometric Median applied directly to the compute nodes worsens as as $p$ increases \cite{blanchard2017machine,xie18}. Similarly, \cite{yin2018byzantine} show if we apply coordinate-wise median to $p$ nodes, each of which are assigned $b/p$ gradients, we get a $\Delta$-inexact gradient oracle where $\Delta = \mathcal{O}(\sigma\sqrt{\epsilon p/b}+\sigma\sqrt{d/b})$.
	If $\epsilon$ is constant and $p$ is comparable to $b$, then this is roughly $\sigma$, whereas \dracolite{} can produce a $\Delta$-inexact gradient oracle for $\Delta = \tilde{\mathcal{O}}(\sigma/\sqrt{b})$. Thus, the robustness of \dracolite{} can scale much better with the number of nodes than naive robust aggregation of gradients.

\section{Experiments}\label{Sec:Experiment}
In this section we present an experimental study on pairing \dracolite{} with a set of previously proposed robust aggregation methods, including \multikrum{} \cite{blanchard2017machine}, \bulyan{} \cite{mhamdi2018hidden}, coordinate-wise median \cite{DBLP:journals/corr/abs-1806-05358}. We also incorporate \dracolite{} with a recently proposed Byzantine resilience distributed training method, \signum{} with majority vote \cite{bernstein2018signsgd}. We conduct extensive experiments on the scalability and robustness of these Byzantine resilient methods, and the improvements gained when pairing them with \dracolite{}. All our experiments are deployed on real distributed clusters under various Byzantine attack models. 
Our implementation is publicly available for reproducibility at \footnote{\href{https://github.com/hwang595/DETOX}{https://github.com/hwang595/DETOX}}.

The main findings are as follows: 
1) Applying \dracolite{} leads to significant speedups, \eg, up to an order of magnitude end-to-end training speedup is observed;
2) in defending against state-of-the-art Byzantine attacks, \dracolite{} leads to significant Byzantine-resilience, \eg, applying \bulyan{} on top of \dracolite{} improves the test-set prediction accuracy from 11\% to ~60\% when training VGG13-BN on CIFAR-100 under the ``a little is enough" (ALIE) \cite{baruch2019little} Byzantine attack. Moreover, incorporating \signum{} with \dracolite{} improves the test-set prediction accuracy from $34.92\%$ to $78.75\%$ when defending against a \textit{constatnt Byzantine attck} for ResNet-18 trained on CIFAR-10.

\subsection{Experimental Setup} 
We implemented vanilla versions of the aforementioned Byzantine resilient methods, as well as versions of these methods pairing with \dracolite{}, in PyTorch \cite{paszke2017automatic} with MPI4py \cite{MPI4PY}. Our experimental comparisons are deployed on a cluster of $46$ m5.2xlarge instances on Amazon EC2, where 1 node serves as the PS and the remaining $p=45$ nodes are compute nodes. In all following experiments, we set the number of Byzantine nodes to be $q=5$.

In each iteration of the vanilla Byzantine resilient methods, each compute node evaluates $\frac{b}{p}=32$ gradients sampled from its partition of data while in \dracolite{} each compute node evaluates $r$ times more gradients where $r = 3$, so $\frac{rb}{p} = 96$. The average of these locally computed gradients is then sent back to the PS. After receiving all gradient summations from the compute nodes, the PS applies either vanilla Byzantine resilient methods or their \dracolite{} paired variants. 
\subsection{Implementation of \dracolite{}}
\label{seubsec:implementation-of-detox}
We emphasize that \dracolite{} is not simply a new robust aggregation technique. It is instead a general Byzantine-resilient distributed training framework, and any robust aggregation method can be immediately implemented on top of it to increase its Byzantine-resilience and scalability. Note that after the majority voting stage on the PS one has a wide range of choices for $\mA_0$ and $\mA_1$. 
In our implementations, we had the following setups: 1) $\mA_0=$ Mean, $\mA_1=$ Coordinate-size Median, 2) $\mA_0=$ \multikrum{}, $\mA_1=$ Mean, 3) $\mA_0=$ \bulyan{}, $\mA_1=$ Mean, and 4) $\mA_0=$coordinate-wise majority vote, $\mA_1=$coordinate-wise majority vote (designed specifically for pairing \dracolite{} with \signum{}). We tried $\mA_0=$ Mean and $\mA_1=$ \multikrum{}/\bulyan{} but we found that setups 2) and 3) had better resilience than these choices.
More details on the implementation and system-level optimizations that we performed can be found in the Appendix \ref{appendix:implem}.

\paragraph{Byzantine attack models}
We consider two Byzantine attack models for pairing \multikrum{}, \bulyan{}, and coordinate-wise median with \dracolite{}. First, we consider the \textit{``reversed gradient"} attack, where adversarial nodes that were supposed to send ${\bf g}$ to the PS instead send  $-c {\bf g}$, for some $c>0$. 

The second Byzantine attack model we study is the recently proposed ALIE \cite{baruch2019little} attack, where the Byzantine compute nodes collude and use their locally calculated gradients to estimate the mean and standard deviation of the entire set of gradients among all other compute nodes. The Byzantine nodes then use the estimated mean and variance to manipulate the gradient they send back to the PS. To be more specific, Byzantine nodes will send $\hat \mu + z \cdot \hat \sigma$ where $\hat \mu$ and $\hat \sigma$ are the estimated mean and standard deviation by Byzantine nodes and $z$ is a hyper-parameter which was tuned empirically in \cite{baruch2019little}. 

Then, to compare the resilience of the vanilla \signum{} and the one paired with \dracolite{}, we will consider a simple attack, \ie, \textit{constant Byzantine attack}. In \textit{constant Byzantine attack}, Byzantine compute nodes simply send a constant gradient matrix with dimension equal to that of the true gradient where all elements equals to $-1$. Under this attack, and specifically for \signum{}, the Byzantine gradients will mislead model updates towards wrong directions and corrupt the final model trained via \signum{}. 

\paragraph{Datasets and models} We conducted our experiments over ResNet-18 \cite{he2016deep} on CIFAR-10 and VGG13-BN \cite{simonyan2014very} on CIFAR-100. For each dataset, we use data augmentation (random crops, and flips) and normalize each individual image. Moreover, we tune the learning rate scheduling process and use the constant momentum at $0.9$ in running all experiments. The details of parameter tuning and dataset normalization are reported in the Appendix \ref{appendix:paramTun}.

\subsection{Results}
\paragraph{Scalability} 
We report a per-iteration runtime analysis of the aforementioned robust aggregations and their \dracolite{} paired variants on both CIFAR-10 over ResNet-18 and CIFAR-100 over VGG-13. The results on  ResNet-18 and VGG13-BN are shown in Figure \ref{fig:introLieConvResNet} and \ref{fig:breakdownAnalysisVGG} respectively. 

We observe that although \dracolite{} requires slightly more compute time per iteration, due to its algorithmic redundancy, it largely reduces the PS computation cost during the aggregation stage, which matches our theoretical analysis. Surprisingly, we observe that by applying \dracolite{}, the communication costs decrease. This is because the variance of computation time among compute nodes increases with heavier computational redundancy. Therefore, after applying \dracolite{}, compute nodes tend not to send their gradients to the PS at the same time, which mitigates a potential network bandwidth congestion. In a nutshell, applying \dracolite{} can lead to up to 3$\times$ per-iteration speedup.
\begin{figure}[H]
	\centering
	\includegraphics[width=0.4\textwidth]{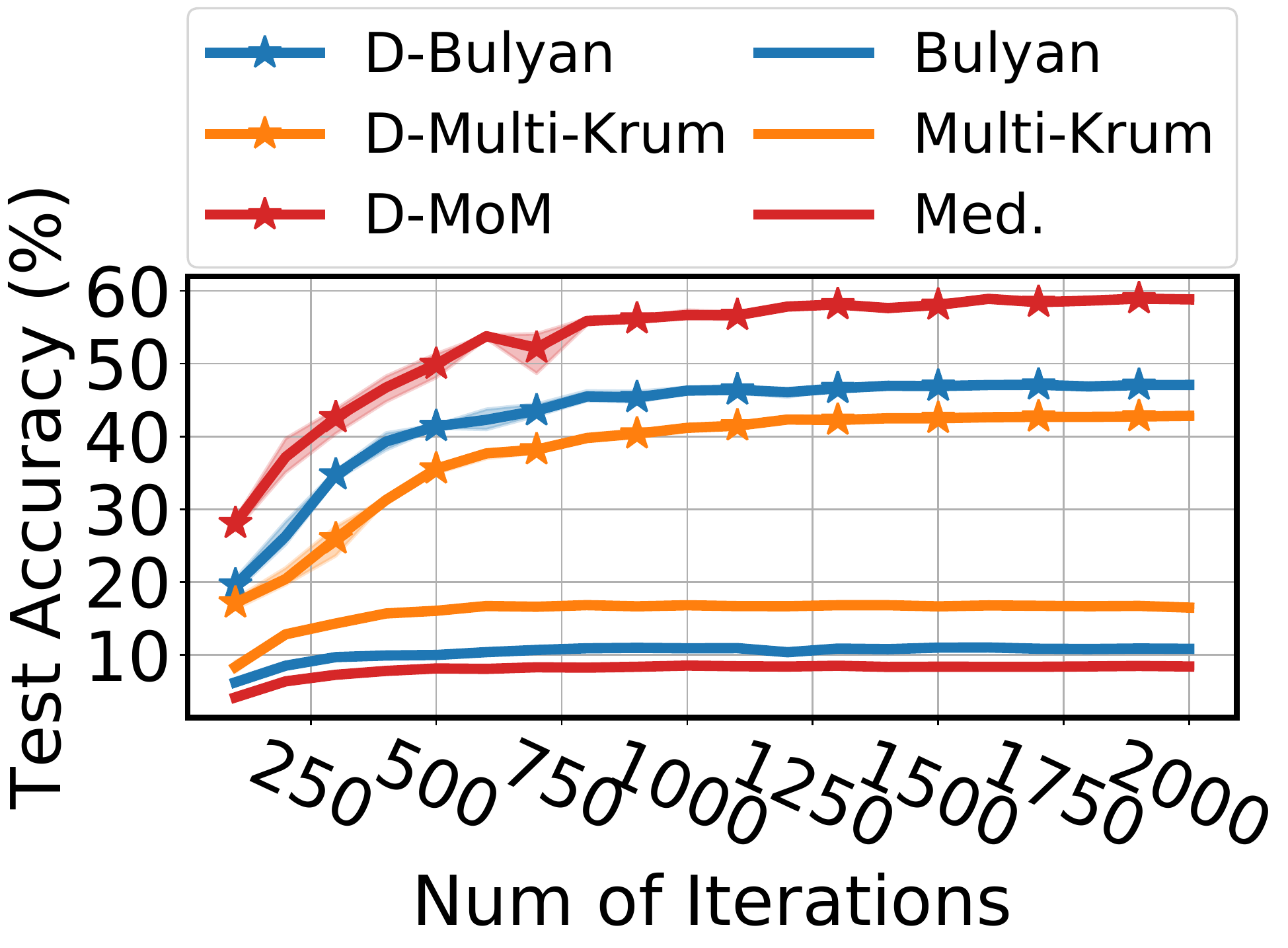}
	\includegraphics[width=0.44\textwidth]{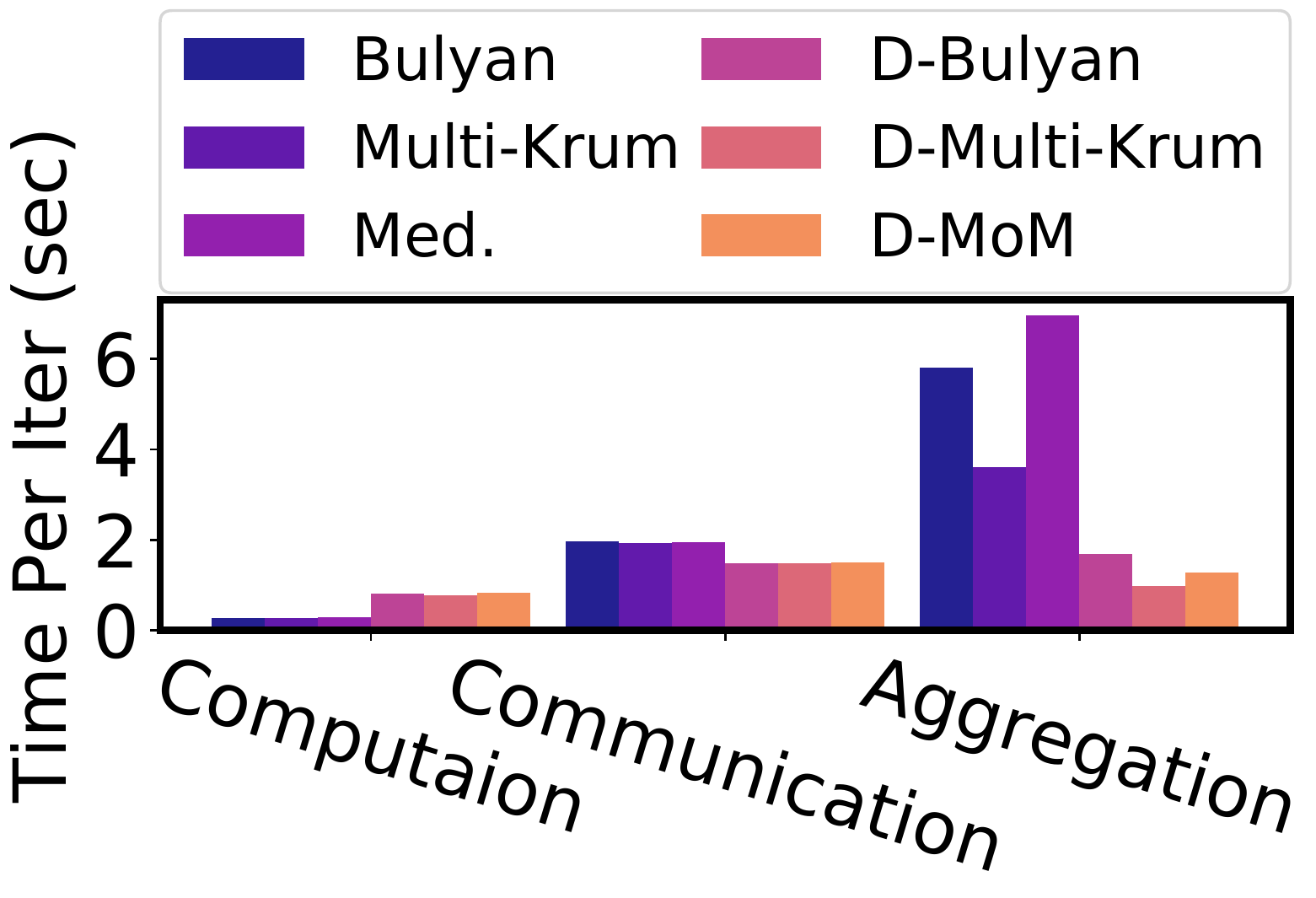}
	\caption{Left: Convergence performance of various robust aggregation methods over ALIE attack. Right: Per iteration runtime analysis of various robust aggregation methods. Results of VGG13-BN on CIFAR-100}\label{fig:breakdownAnalysisVGG}
\end{figure}

\begin{figure*}[ht]
	\centering      
	\subfigure[ResNet-18, \multikrum{}]{\includegraphics[width=0.31\textwidth]{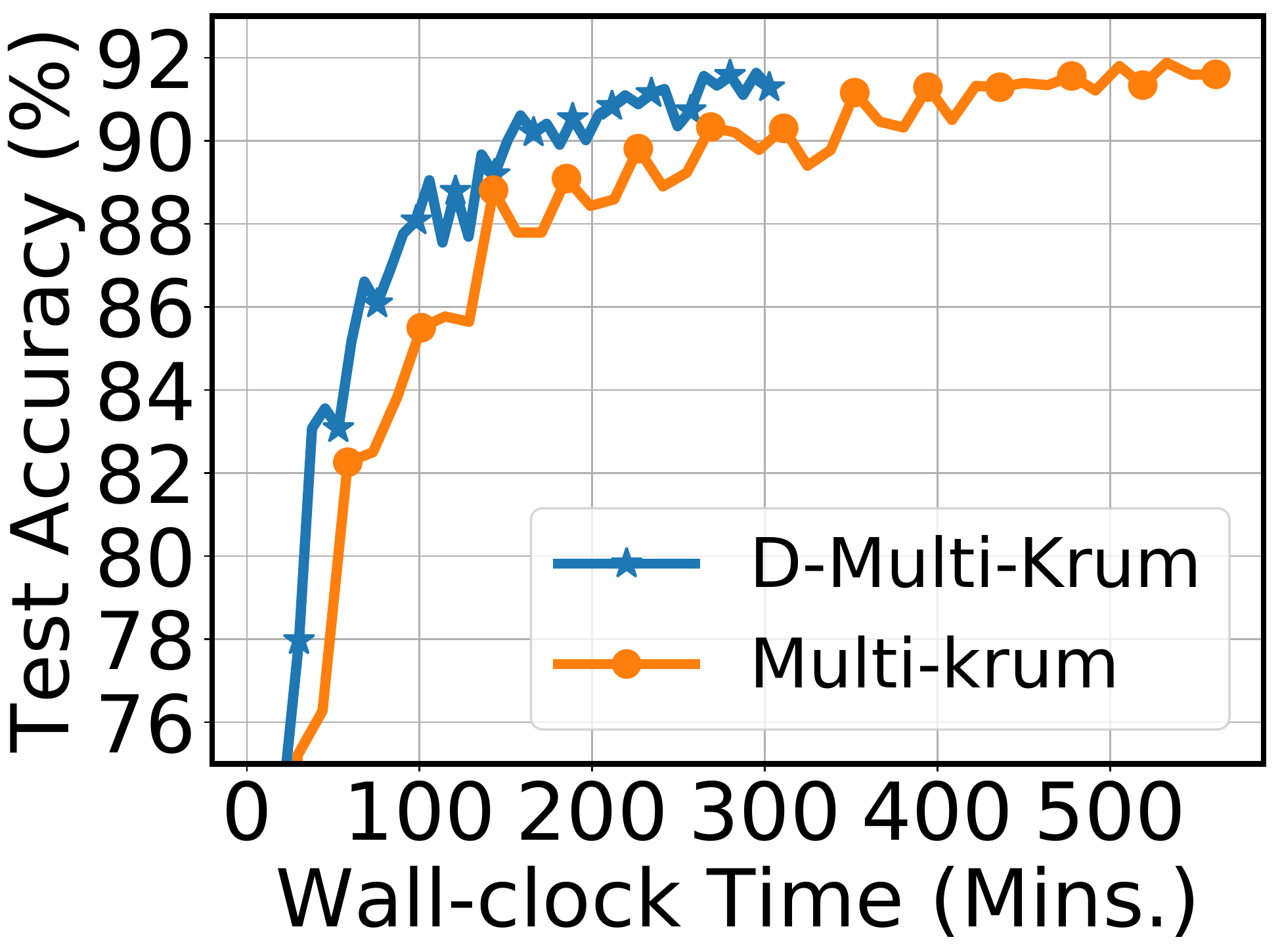}}
	\subfigure[ResNet-18, \bulyan{}]{\includegraphics[width=0.31\textwidth]{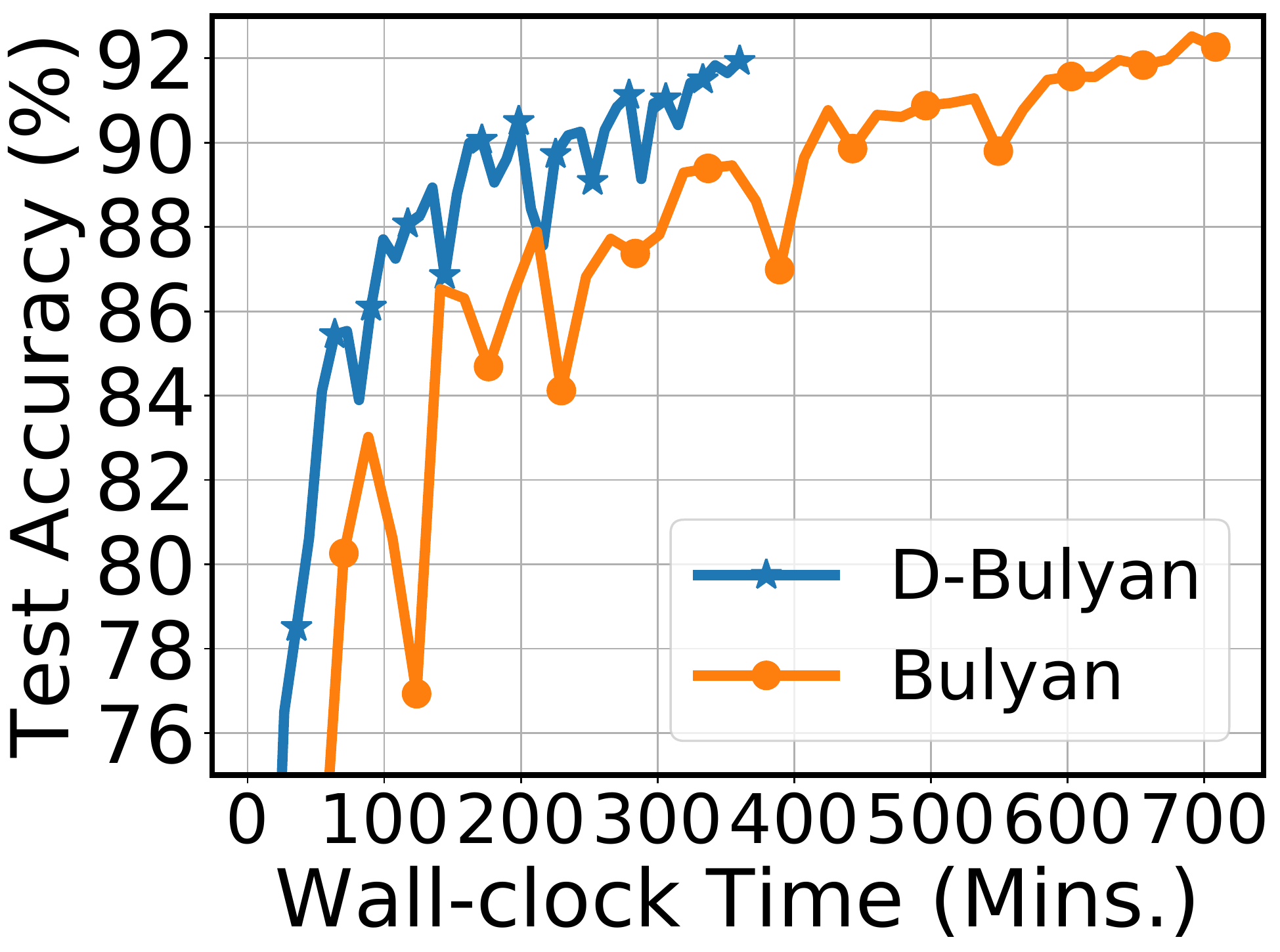}}
	\subfigure[ResNet-18, \textit{Coord-Median}]{\includegraphics[width=0.31\textwidth]{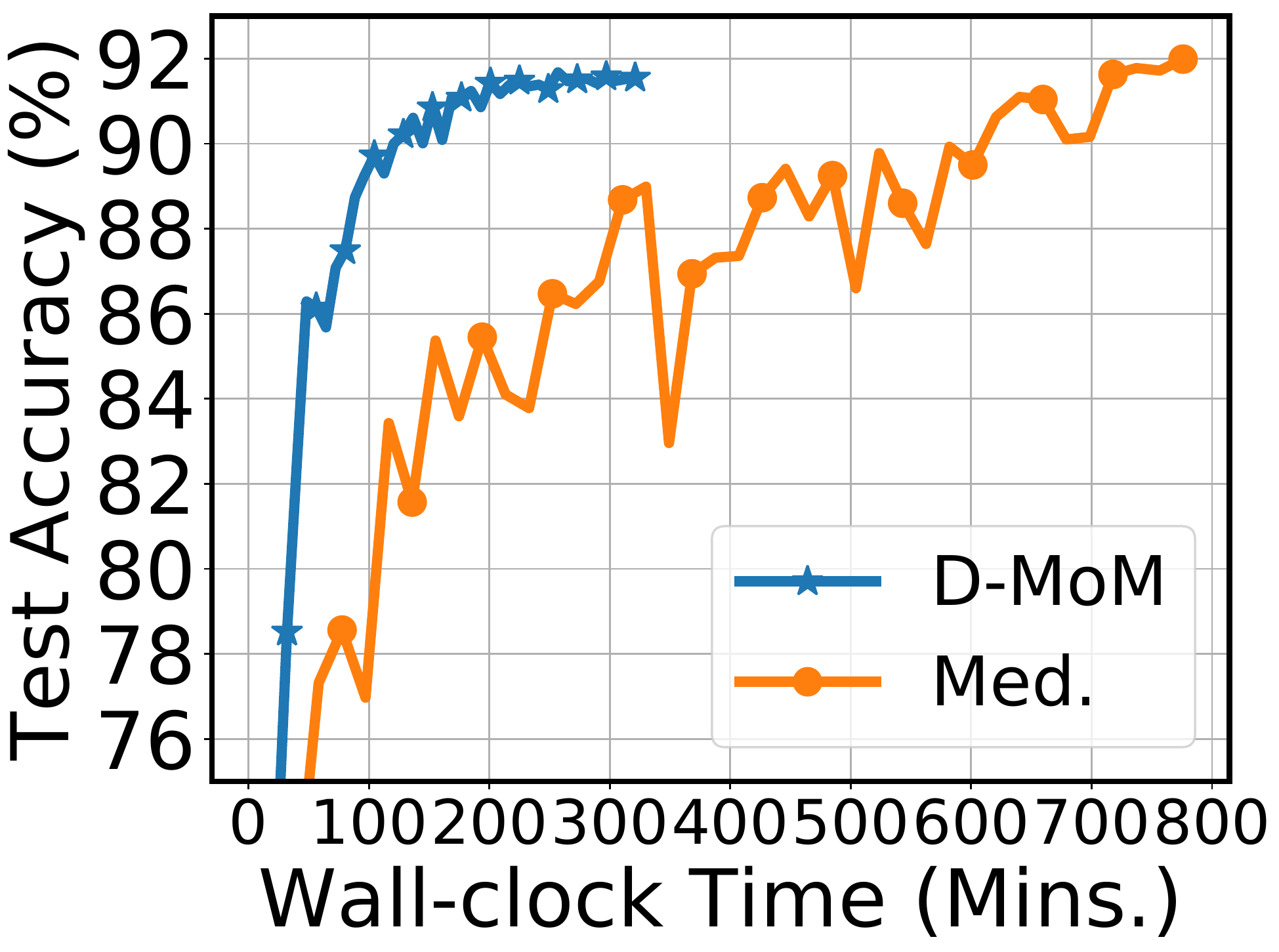}}
	\\
	\subfigure[VGG13-BN, \multikrum{}]{\includegraphics[width=0.31\textwidth]{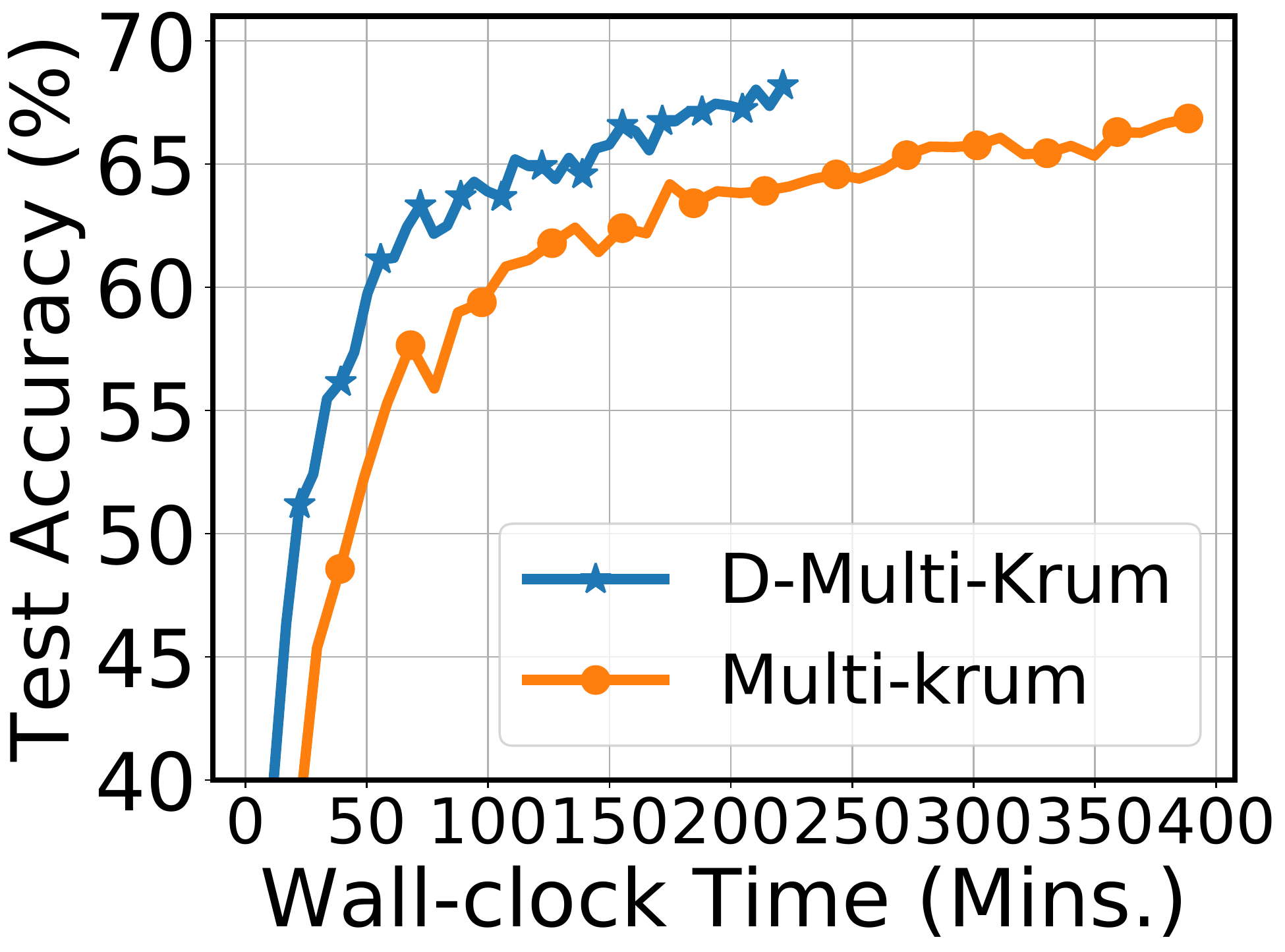}}
	\subfigure[VGG13-BN, \bulyan{}]{\includegraphics[width=0.31\textwidth]{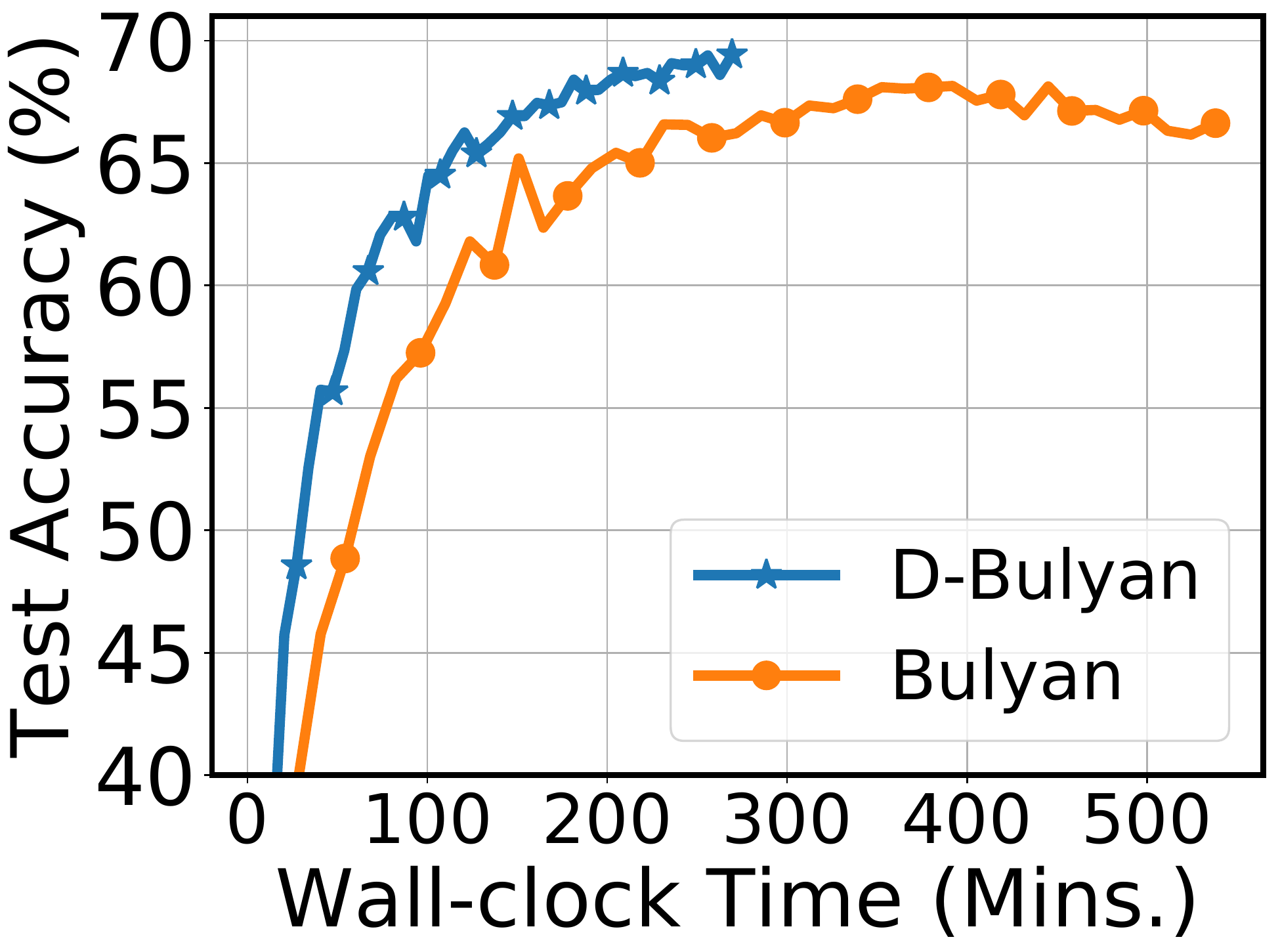}}
	\subfigure[VGG13-BN, \textit{Coord-Median}]{\includegraphics[width=0.31\textwidth]{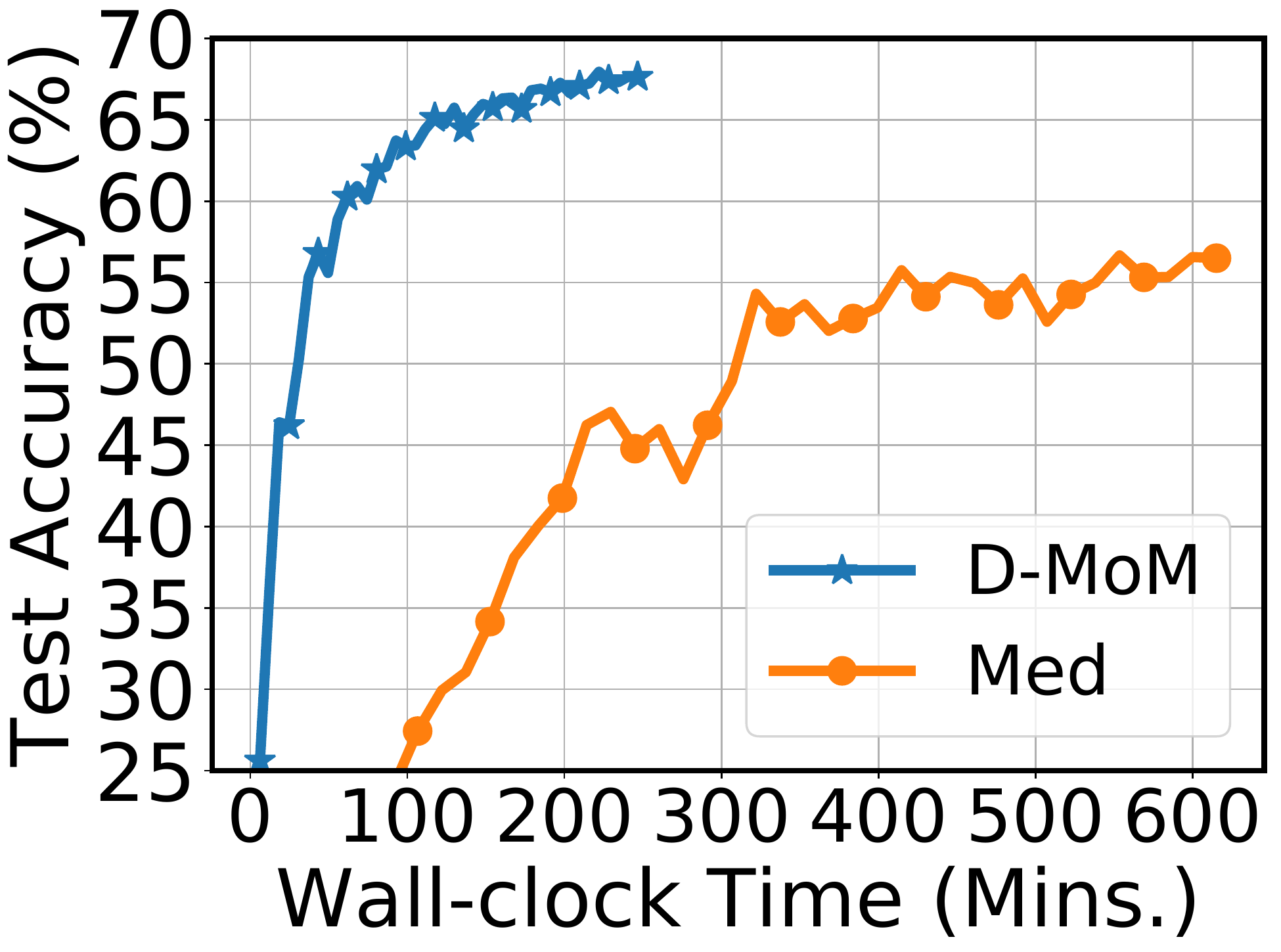}}
	\caption{End-to-end convergence comparisons among applying \dracolite{} on different baseline methods under \textit{reverse gradient} attack. (a)-(c): comparisons between vanilla and \dracolite{} deployed version of \multikrum{}, \bulyan{}, and coordinate-wise median over ResNet-18 trained on CIFAR-10. (d)-(f): same comparisons over VGG13-BN trained on CIFAR-100.}
	\label{fig:accRevGradRuntime}
\end{figure*}
\paragraph{Byzatine-resilience under various attacks}
\begin{table}[H]
	\caption{ Summary of defense results over ALIE attacks~\cite{baruch2019little}; the numbers reported correspond to test set prediction accuracy.}
	\label{table:byzantine_resilience}
	\begin{center}
		 \ra{1.3}
			\begin{tabular}{ccc}
				\toprule \textbf{Methods}
				& ResNet-18 &  VGG13-BN \bigstrut\\
				\midrule
				D-\multikrum{} & 80.3\% & 42.98\% \bigstrut\\
				D-\bulyan{} & 76.8\% & 46.82\%  \bigstrut\\
				D-Med. & \textbf{86.21\%} & \textbf{59.51\%}  \bigstrut\\
				\multikrum{} & 45.24\% & 17.18\%  \bigstrut\\
				\bulyan{} & 42.56\% & 11.06\% \bigstrut\\
				Med. & 43.7\% & 8.64\%  \bigstrut\\
				\bottomrule
			\end{tabular}%
	\end{center}
\end{table}

We first study the Byzantine-resilience of all methods and baselines under the ALIE attack, which is to the best of our knowledge, the strongest Byzantine attack known. The results on ResNet-18 and VGG13-BN are shown in Figure \ref{fig:introLieConvResNet} and  \ref{fig:breakdownAnalysisVGG} respectively. Applying \dracolite{} leads to significant improvement on Byzantine-resilience compared to vanilla \multikrum{}, \bulyan{}, and coordinate-wise median on both datasets as shown in Table \ref{table:byzantine_resilience}.

We then consider the \textit{reverse gradient} attack, the results are shown in Figure \ref{fig:accRevGradRuntime}. Since \textit{reverse gradient} is a much weaker attack, all vanilla robust aggregation methods and their \dracolite{} paired variants defend well.

Moreover, applying \dracolite{} leads to significant end-to-end speedups. In particular, combining the coordinate-wise median with \dracolite{} led to a $5\times$ speedup gain in the amount of time to achieve to 90\% test set prediction accuracy for ResNet-18 trained on CIFAR-10. The speedup results are shown in Figure \ref{fig:speedUp}. For the experiment where VGG13-BN was trained on CIFAR-100, up to an order of magnitude end-to-end speedup can be observed in coordinate-wise median applied on top of \dracolite{}. 
 
For completeness, we also compare versions of \dracolite{} with \textsc{Draco} \cite{chen2018draco}. This is not the focus of this work, as we are primarily interested in showing that \dracolite{} improves the robustness of traditional robust aggregators. However the comparisons with \textsc{Draco} can be found in the Appendix \ref{appendix:DracoComparisons}.

\begin{figure*}
	\centering      
	\subfigure[ResNet-18, CIFAR-10]{\includegraphics[width=0.29\textwidth]{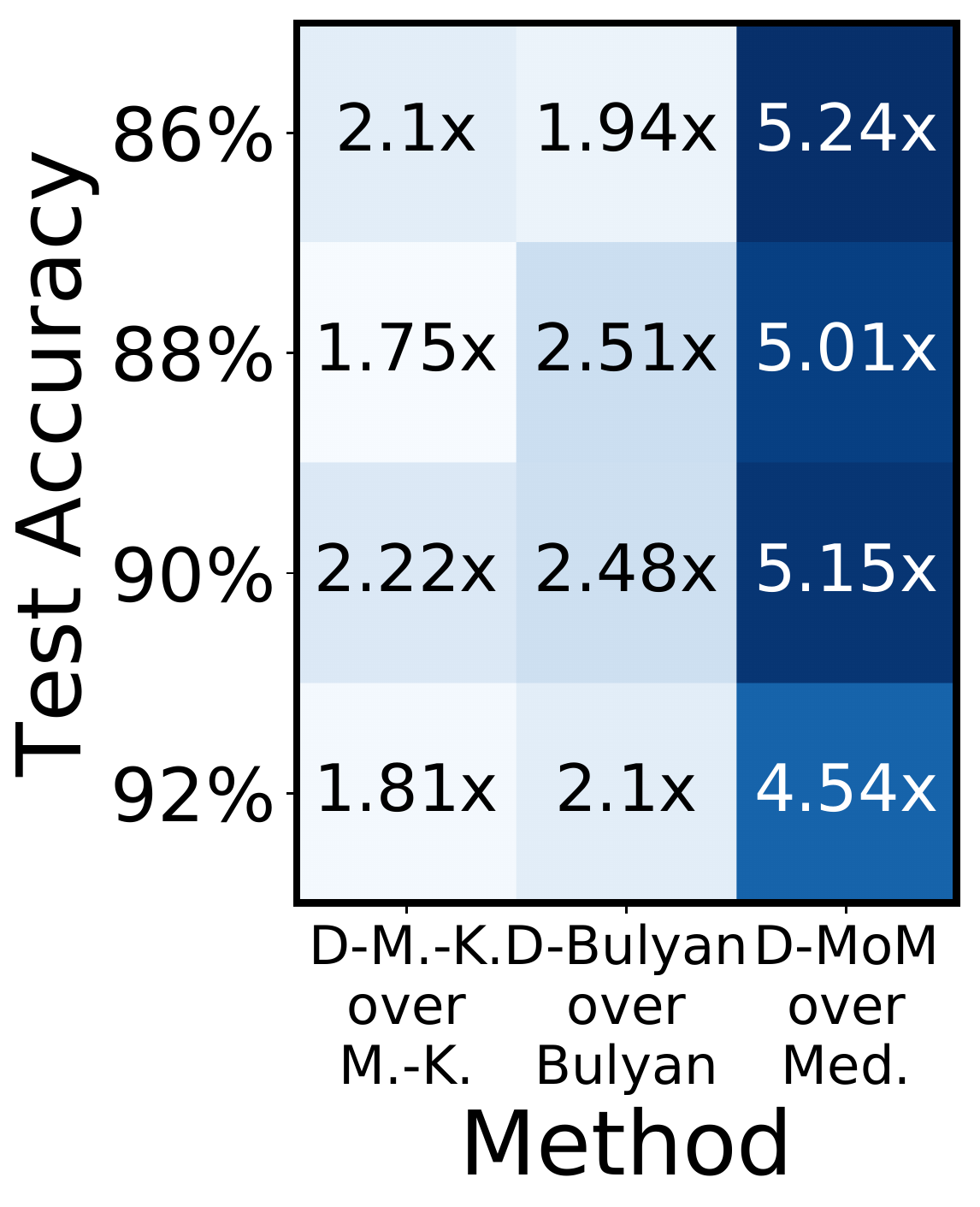}}
	\subfigure[VGG13-BN, CIFAR-100]{\includegraphics[width=0.38\textwidth]{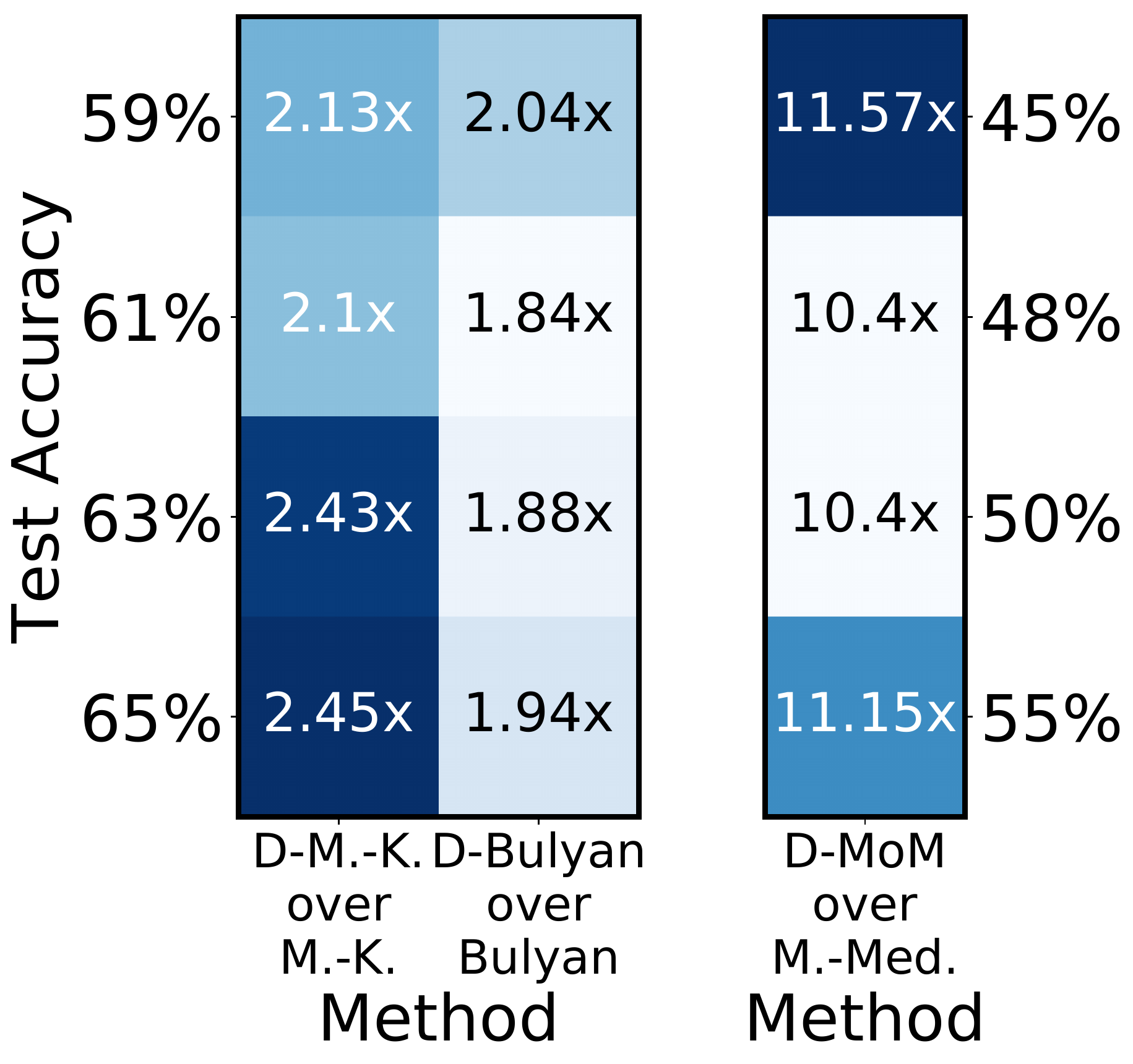}}
	\caption{Speedups in converging to specific accuracies for vanilla robust aggregation methods and their \dracolite{}-deployed variants under \textit{reverse gradient} attack: (a) results of ResNet-18 trained on CIFAR-10, (b) results of VGG13-BN trained on CIFAR-100}
	\label{fig:speedUp}
\end{figure*}


\paragraph{Comparison between \dracolite{} and \signum{}} We compare \dracolite{} paired \signum{} with vanilla \signum{} where only the sign information of each gradient element will be sent to the PS. The PS, on receiving sign information of gradients, takes coordiante-wise majority votes to get the model update. As is argued in \cite{bernstein2018signsgd}, the gradient distribution for many mordern deep networks can be close to unimodal and symmetric, hence a random sign flip attack is weak since it will not hurt the gradient distribution. We thus consider a stronger \textit{constant Byzantine attack} introduced in Section \ref{seubsec:implementation-of-detox}.
To pair \dracolite{} with \signum{}, after the majority voting stage of \dracolite{}, we set both $\mathcal{A}_0$ and $\mathcal{A}_1$ as coordinate-wise majority vote describe in Algorithm 1 in \cite{bernstein2018signsgd}.  For hyper-parameter tuning, we follow the suggestion in \cite{bernstein2018signsgd} and set the initial learning rate at $0.0001$. However, in defensing the our proposed \textit{constant Byzantine attack}, we observe that constant learning rates lead to model divergence. Thus, we tune the learning rate schedule and use $0.0001\times0.99^{t\pmod{10}}$ for both \dracolite{} and \dracolite{} paired \signum{}. 

The results of both ResNet-18 trained on CIFAR-10 and VGG13-BN trained on CIFAR-100 are shown in Figure \ref{fig:appendixComparesignSGD} where we observe that \dracolite{} paired \signum{} improves the Byzantine resilience of \signum{} significantly. For ResNet-18 trained on CIFAR-10, \dracolite{} improves testset prediction accuracy of vanilla \signum{} from $34.92\%$ to $78.75\%$. While for VGG13-BN trained on CIFAR-100, \dracolite{} improves testset prediction accuracy (TOP-1) of vanilla \signum{} from $2.12\%$ to $40.37\%$.


\begin{figure*}[htp]
	\centering      
	\subfigure[ResNet-18 on CIFAR-10]{\includegraphics[width=0.4\textwidth]{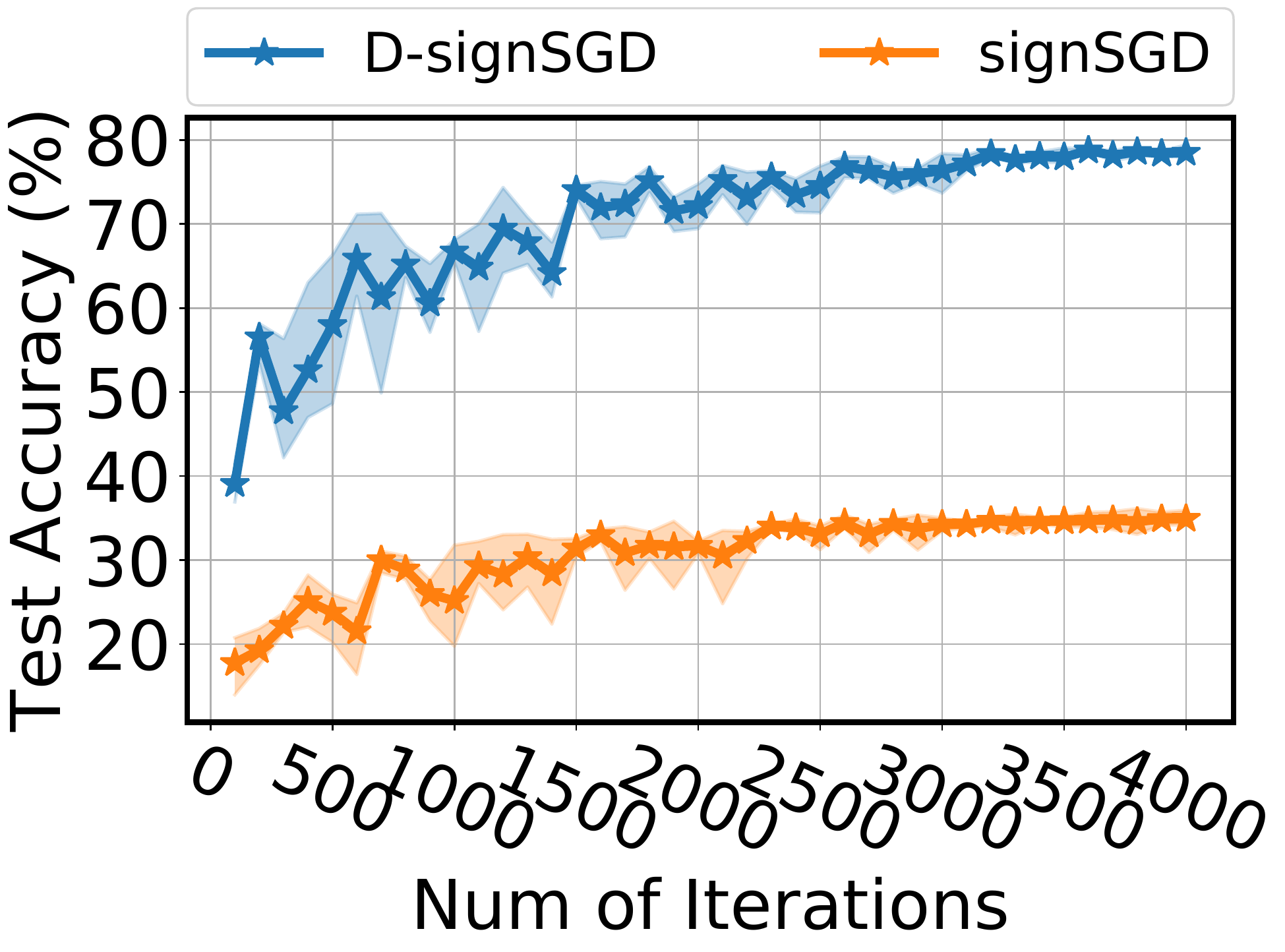}}
	\subfigure[VGG13-BN on CIFAR-100]{\includegraphics[width=0.4\textwidth]{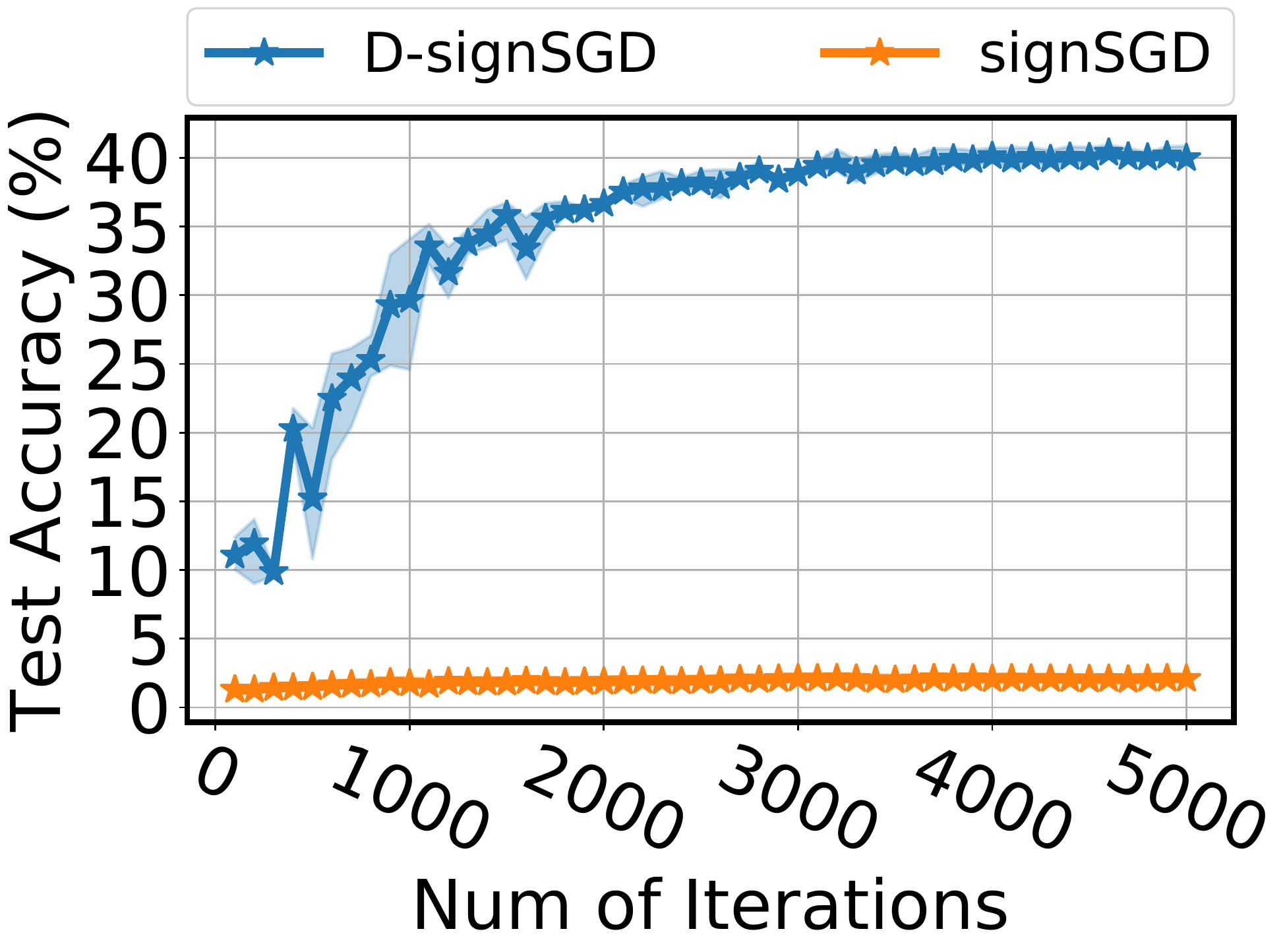}}
	\caption{Convergence comparisons among \dracolite{} paired with \signum{}  and vanilla \signum{} under \textit{constant Byzantine attack} on: (a) ResNet-18 trained on CIFAR-10 dataset; (b) VGG13-BN trained on CIFAR-100 dataset}
	\label{fig:appendixComparesignSGD}
\end{figure*}

	\begin{figure}[htp]
	\centering
	\includegraphics[width=0.45\textwidth]{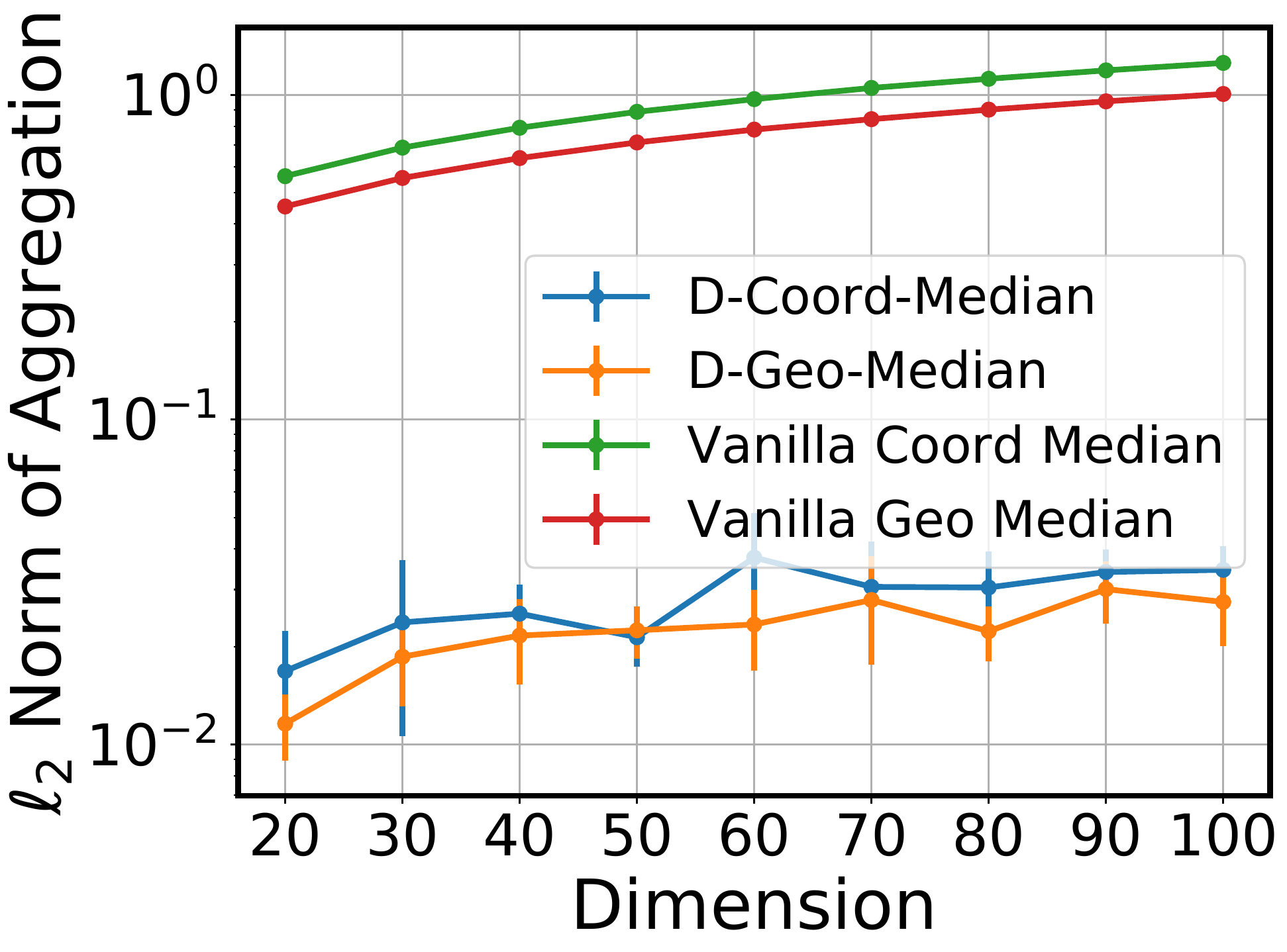}
	\caption{Experiment with synthetic data for robust mean estimation: error is reported against dimension (lower is better)}\label{fig:meanEst}

	\end{figure}

\paragraph{Mean estimation on synthetic data} 

To verify our theoretical analysis, we finally conduct an experiment for a simple mean estimation task. The result of our synthetic mean experiment are shown in Figure \ref{fig:meanEst}. In the synthetic mean experiment, we set $p=220000, r=11, q = \lfloor\frac{e^{r}}{3}\rfloor$, and for dimension $d \in \{20, 30, \cdots, 100\}$, we generate $20$ samples \textit{iid} from $\mathcal{N}(0,I_d)$. The Byzantine nodes, instead send a constant vector of the same dimension with $\ell_2$ norm of 100. The robustness of an estimator is reflected in the $\ell_2$ norm of its mean estimate. Our experimental results show that \dracolite{} increases the robustness of geometric median and coordinate-wise median, and decreases the dependecne of the error on $d$.

	\section{Conclusion}\label{sec:conclusions}
	\vspace{-0.3cm}
	In this paper, we present \dracolite{}, a new framework for Byzantine-resilient distributed training. Notably, any robust aggregator can be immediatley used with \dracolite{} to increase its robustness and efficiency. We demonstrate these improvements theoretically and empirically. In the future, we would like to devise a privacy-preserving version of \dracolite{}, as currently it requires the PS to be the owner of the data, and also to partition data among compute nodes. This means that the current version of \dracolite{} is not privacy preserving. Overcoming this limitation would allow us to develop variants of \dracolite{} for federated learning. 

	
	\bibliographystyle{unsrtnat} 
	\bibliography{draco_lite}
	
	\newpage
	\appendix
	
\section{Proofs}

\subsection{Proof of Theorem \ref{lemma:2}}
The following is a more precise statement of the theorem.
\begin{theorem*}
If $r>3$, $p\geq 2r$ and $\epsilon<1/40$ 
then $\E[\hat{q}]$ falls as $\mathcal{O}\left(q(40\epsilon(1-\epsilon))^{(r-1)/2}/r\right)$ which is exponential in r.
\end{theorem*}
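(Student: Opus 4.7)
The plan is to bound $\E[\hat{q}]$ directly by computing the probability that a single group $A_j$ goes Byzantine and then invoking linearity of expectation, since the $X_j$ are identically distributed. Since the partition of the $p$ nodes into groups of size $r$ is uniformly random and $q$ of the nodes are Byzantine, the number of Byzantine nodes inside $A_1$ follows a hypergeometric distribution $\text{Hyp}(p,q,r)$. Because $r$ is odd, $X_1 = 1$ iff this count is at least $m := (r+1)/2$. Hence
\[
\E[\hat{q}] \;=\; \frac{p}{r}\,\Pr\!\left[\text{Hyp}(p,q,r) \ge m\right],
\]
and the entire problem reduces to giving an exponentially small upper bound on this tail probability.

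The first key step is to dominate the hypergeometric tail by the corresponding binomial tail. By a classical inequality of Hoeffding, for any threshold $a \ge r q/p = r\epsilon$,
\[
\Pr\!\left[\text{Hyp}(p,q,r) \ge a\right] \;\le\; \Pr\!\left[\text{Bin}(r,\epsilon) \ge a\right].
\]
Since $\epsilon < 1/40 < 1/2$ and $r \ge 3$, we have $m = (r+1)/2 \ge r\epsilon$, so the inequality applies at $a=m$. (The assumption $p \ge 2r$ is what makes the hypergeometric sampling meaningful and the comparison non-trivial.)

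The second step is to bound the binomial tail in closed form. I would factor out the dominant $k=m$ term and sum the geometric tail:
\[
\Pr\!\left[\text{Bin}(r,\epsilon) \ge m\right]
\;=\; \sum_{k=m}^{r}\binom{r}{k}\epsilon^{k}(1-\epsilon)^{r-k}
\;\le\; \binom{r}{m}\epsilon^{m}(1-\epsilon)^{r-m}\sum_{k=m}^{r}\Bigl(\tfrac{\epsilon}{1-\epsilon}\Bigr)^{k-m}.
\]
For $\epsilon < 1/40$, the ratio $\epsilon/(1-\epsilon) < 1/39$, so the geometric sum is bounded by $1/(1-2\epsilon) \le 40/38 < 2$. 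Using $\binom{r}{m} \le 2^{r}$ together with $m = (r+1)/2$ and $r-m=(r-1)/2$ gives
\[
\epsilon^{m}(1-\epsilon)^{r-m} \;=\; \epsilon \cdot \bigl(\epsilon(1-\epsilon)\bigr)^{(r-1)/2},
\qquad
2^{r}\bigl(\epsilon(1-\epsilon)\bigr)^{(r-1)/2} \;=\; 4\bigl(4\epsilon(1-\epsilon)\bigr)^{(r-1)/2},
\]
so altogether $\Pr[X_1=1] = O\!\bigl(\epsilon\bigl(4\epsilon(1-\epsilon)\bigr)^{(r-1)/2}\bigr)$, which is comfortably dominated by $O\!\bigl(\epsilon\bigl(40\epsilon(1-\epsilon)\bigr)^{(r-1)/2}\bigr)$. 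Multiplying by $p/r$ and using $p\epsilon = q$ yields the claimed bound
\[
\E[\hat{q}] \;=\; \mathcal{O}\!\left(\frac{q}{r}\bigl(40\epsilon(1-\epsilon)\bigr)^{(r-1)/2}\right),
\]
and since $40\epsilon(1-\epsilon) < 1$ when $\epsilon < 1/40$, the bound is exponentially decreasing in $r$.

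The main obstacle is really just making sure the hypergeometric-to-binomial comparison is applied correctly: it requires that the threshold $m$ lie above the mean $r\epsilon$ of the hypergeometric (automatic here, since $\epsilon<1/2$) and enough slack in $p$ relative to $r$ that the partition is well-defined (given by $p\ge 2r$). Everything else is straightforward: a geometric-series tail bound and the elementary estimate $\binom{r}{(r+1)/2}\le 2^{r}$. The constant $40$ in the statement is merely a convenient looseness absorbing both the $\binom{r}{m}\le 2^{r}$ factor (contributing a $4$) and the $1/(1-2\epsilon)$ factor from the geometric sum; a sharper constant is achievable but not needed.
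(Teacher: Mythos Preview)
Your argument has a real gap at the hypergeometric-to-binomial step. Hoeffding's 1963 result establishes the convex order $\E[\phi(H)]\le\E[\phi(B)]$ for every convex $\phi$, where $H\sim\mathrm{Hyp}(p,q,r)$ and $B\sim\mathrm{Bin}(r,\epsilon)$; this transfers Chernoff-type bounds from $B$ to $H$, but it does \emph{not} give the pointwise tail inequality $\Pr[H\ge a]\le\Pr[B\ge a]$ for all $a\ge r\epsilon$ that you invoke. That inequality is in fact false in general. Take $q=1$, $r=5$, and any $p>40$ (so $\epsilon=1/p<1/40$ and $p\ge 2r$): then $\Pr[H\ge 1]=r/p$ exactly, while $\Pr[B\ge 1]=1-(1-1/p)^r<r/p$ by Bernoulli's inequality, and the threshold $a=1$ satisfies $a>r/p=r\epsilon$. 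It may well be that the comparison happens to hold at the specific threshold $a=(r+1)/2$ you actually need when $\epsilon<1/40$, but that is a separate claim requiring its own argument, and it is not what Hoeffding proved.

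The paper sidesteps this entirely by bounding the hypergeometric tail directly with elementary estimates: it writes $\E[X_1]$ as the sum of the $(r+1)/2$ hypergeometric point masses, bounds each via $\binom{q}{k}\le q^k/k!$, $\binom{p-q}{i}\le(p-q)^i/i!$, $\binom{p}{r}\ge(p-r)^r/r!$, and then invokes $p\ge 2r$ to obtain $(p-r)^r\ge(p/2)^r$. That last step is where the hypothesis $p\ge 2r$ actually does work---in your sketch it is only decorative. The price of this direct route is an extra factor of $2^r$, which together with the $(r+1)/2$ term count and $\binom{r}{(r-1)/2}\le 2^r$ is precisely what inflates the base from your $4\epsilon(1-\epsilon)$ to the paper's $40\epsilon(1-\epsilon)$. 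Your downstream treatment of the binomial tail (dominant term times a geometric series) is correct and cleaner than the paper's ``largest term times number of terms'' bound, and would indeed yield a sharper constant---but only once the comparison step is repaired, either by the direct estimate above or by running a genuine Chernoff argument through Hoeffding's convex order.
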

\begin{proof}By direct computation,
\begin{align*}
	\E(\hat{q}) &= \E\left(\sum_{i=1}^{p/r} X_i\right)\\
    &= \dfrac{p}{r}\E(X_i)&\\
    &= \dfrac{p}{r} \dfrac{\displaystyle\sum_{i=0}^{(r-1)/2}\binom{q}{r-i}\binom{p-q}{i}}{\displaystyle\binom{p}{r}}&\\
    &\leq \dfrac{p}{r}\dfrac{\displaystyle\frac{r+1}{2}\binom{q}{(r+1)/2}\binom{p-q}{(r-1)/2}}{\displaystyle\binom{p}{r}}&\\
    &\leq \dfrac{p}{r}\dfrac{r+1}{2}\dfrac{\displaystyle \binom{r}{(r-1)/2}q^{(r+1)/2}(p-q)^{(r-1)/2}}{(p-r)^r}&\\
    &= \dfrac{p}{r}\frac{r+1}{2}\dfrac{\displaystyle\binom{r}{(r-1)/2}q^{(r+1)/2}(p-q)^{(r-1)/2}}{p^r(1-r/p)^r}&\\
    &\leq \dfrac{p}{r}\frac{r+1}{2}\dfrac{\displaystyle\binom{r}{(r-1)/2}q^{(r+1)/2}(p-q)^{(r-1)/2}}{p^r(1/2)^r}&\\
    &=\dfrac{p}{r}(r+1)2^{r-1}\binom{r}{(r-1)/2}\epsilon^{(r+1)/2}(1-\epsilon)^{(r-1)/2}.&
\end{align*}

Note that $\binom{r}{(r-1)/2}$ is the coefficient of $x^{(r+1)/2}(1-x)^{(r-1)/2}$ in the binomial expansion of $1 = 1^r = (x + (1-x))^r$. Therefore, setting $x=\frac{1}{2}$, we find that $\binom{r}{(r-1)/2} \leq 2^r$. Therefore,

\begin{align*}    
    & \dfrac{p}{r}(r+1)2^{r-1}\binom{r}{(r-1)/2}\epsilon^{(r+1)/2}(1-\epsilon)^{(r-1)/2}&\\
    &\leq \dfrac{p}{r}(r+1)2^{2r-1}\epsilon^{(r+1)/2}(1-\epsilon)^{(r-1)/2}&\\
    &= \dfrac{p}{r}(r+1)\epsilon \bigg(2^{2r-1} \epsilon^{(r-1)/2}(1-\epsilon\bigg)^{(r-1)/2})&\\
    &= \dfrac{2q}{r}(r+1)\bigg(16\epsilon(1-\epsilon)\bigg)^{(r-1)/2}&\\
    &= \dfrac{2q}{r}\bigg(16(r+1)^{2/(r-1)}\epsilon(1-\epsilon)\bigg)^{(r-1)/2}&.\end{align*}

Note that since $r > 3$ and $r$ is odd, we have $r \geq 5$. Therefore,
    \begin{align*} 
    \E(\hat{q}) \leq 2q(40\epsilon(1-\epsilon))^{(r-1)/2}/r.
    \end{align*}
\end{proof}

For $r=3$, we have the following lemma.
\begin{lemma}
 If $r=3$, then $\E[\hat{q}]\leq q(4\epsilon-2\epsilon^2)/3$  when $n\geq 6$.
\end{lemma}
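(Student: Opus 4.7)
The plan is to compute $\E[\hat{q}]$ exactly for $r=3$ via the hypergeometric distribution induced by the random partition, and then reduce the desired inequality to an elementary fact about $\epsilon = q/p < 1/2$. By linearity of expectation and symmetry of the random partition, $\E[\hat q] = (p/3)\cdot\Pr[X_1 = 1]$, where $X_1$ is the indicator that the block $A_1$ (a uniformly random subset of $[p]$ of size $3$) contains at least two Byzantine nodes.

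The most direct route is to bound $\Pr[X_1=1]$ via inclusion–exclusion on the three pair events $A_{ij}=\{\text{nodes } i,j \in A_1 \text{ are both Byzantine}\}$ for $\{i,j\}\subset\{1,2,3\}$. Since any two of these pair events intersect in the same event ``all three are Byzantine,'' inclusion–exclusion gives
$$\Pr[X_1=1] \;=\; 3\Pr[A_{12}] - 2\Pr[\text{all three Byz}] \;\leq\; 3\Pr[A_{12}] \;=\; \frac{3q(q-1)}{p(p-1)}.$$
(For a sanity check, one can also carry out the direct hypergeometric calculation, which yields the exact value $\Pr[X_1=1] = q(q-1)(3p-2q-2)/[p(p-1)(p-2)]$; this matches the same-style derivation used in the proof of Theorem~\ref{lemma:2}.)

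Next, since $q \leq p$ one always has $(q-1)/(p-1) \leq q/p$, so $\Pr[X_1=1] \leq 3\epsilon^2$, and therefore
$$\E[\hat q] \;=\; \frac{p}{3}\,\Pr[X_1=1] \;\leq\; p\epsilon^2 \;=\; q\epsilon.$$
It remains to observe that $\epsilon \leq (4\epsilon-2\epsilon^2)/3$ iff $3 \leq 4-2\epsilon$ iff $\epsilon \leq 1/2$, which is part of the standing assumption $\epsilon < 1/2$. Multiplying by $q$ yields $\E[\hat q] \leq q\epsilon \leq q(4\epsilon-2\epsilon^2)/3$, as required.

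The proof is essentially routine once one chooses the right parametrization; the only minor obstacle is that the lemma's bound is slightly looser than the natural bound $q\epsilon$ obtained above, but this slack is exactly what makes the final inequality an elementary consequence of $\epsilon < 1/2$. The condition $p \geq 6$ (i.e.\ $p/r \geq 2$) simply guarantees that the partition into groups of size $3$ is non-degenerate, so the expectation over at least two blocks is well-defined in the usual sense.
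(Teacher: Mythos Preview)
Your proof is correct, and it takes a somewhat different route than the paper's. The paper computes the exact hypergeometric value
\[
\Pr[X_1=1]=\frac{q(q-1)(3p-2q-2)}{p(p-1)(p-2)},
\]
rewrites it as $\dfrac{q}{3}\cdot\dfrac{(\epsilon-1/p)(3-2\epsilon-2/p)}{(1-1/p)(1-2/p)}$, and then bounds the last fraction by $4-2\epsilon$; that final inequality is exactly where the hypothesis $p\ge 6$ enters. You instead drop the ``all three Byzantine'' term in inclusion--exclusion to get $\Pr[X_1=1]\le 3q(q-1)/[p(p-1)]\le 3\epsilon^2$, whence $\E[\hat q]\le q\epsilon$, and finish by the elementary observation that $\epsilon\le(4\epsilon-2\epsilon^2)/3$ whenever $\epsilon\le 1/2$.

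Your argument is more elementary and in fact establishes the stated bound without ever invoking $p\ge 6$; the standing assumption $\epsilon<1/2$ already suffices. (Your remark that $p\ge 6$ merely guarantees a ``non-degenerate'' partition is therefore a bit beside the point: the paper genuinely needs $p\ge 6$ for its particular chain of inequalities, whereas your proof does not.) The paper's approach, on the other hand, produces the exact expression for $\E[\hat q]$ along the way, which could be useful if one wanted a sharper constant, but for the lemma as stated your route is cleaner.
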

\begin{proof}    
\begin{align*}
\E(q_e) &= \E(\sum_{i=1}^{\frac{p}{3}} X_i) = \dfrac{p}{3}E(X_i) = \dfrac{p}{3} \dfrac{ \binom{q}{3} + \binom{q}{2}\binom{p-q}{1}}{\binom{n}{3}}&\\
    &= \dfrac{p}{3}\dfrac{q(q-1)(3p-2q-2)}{p(p-1)(p-2)}=\dfrac{q}{3}\dfrac{\left(\epsilon-\frac{1}{p}\right)\left(3-2\epsilon-\frac{2}{p}\right)}{\left(1-\frac{1}{p}\right)\left(1-\frac{2}{p}\right)}&\\
    &\leq \dfrac{q}{3}\epsilon\dfrac{3-2\epsilon-\frac{2}{p}}{1-\frac{2}{p}}\leq q\epsilon(4-2\epsilon)/3
\end{align*}
\end{proof}

\subsection{Proof of Corollary \ref{lemma:5}}
From Theorem \ref{lemma:2} we see that $\E[\hat{q}] \leq 2q(40\epsilon(1-\epsilon))^{(r-1)/2}/r\leq 2q(40\epsilon)^{(r-1)/2}$. Now, straightforward analysis implies that if $\epsilon\leq 1/80$ and $r \geq 3 + 2\log_2 q$ 
 then $\E[\hat{q}]\leq 1$.  
We will then use the following Lemma:
\begin{lemma}\label{lemma:3}For all $\theta > 0$,
    $$\mathbb{P}\left[\hat{q} \geq \E[\hat{q}](1 + \theta)\right] \leq  \left(\dfrac{1}{1+\theta/2}\right)^{ \E[\hat{q}] \theta/2}$$
\end{lemma}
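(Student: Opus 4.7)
The plan is a Chernoff-style argument combined with Markov's inequality, where the main subtlety is that the indicators $X_j$ are not independent, so we need negative association (NA) to factor the moment generating function. The first step is to argue that the $X_j$ are NA. Each $X_j$ indicates that node group $A_j$ contains more than $r/2$ Byzantines, and the blocks $A_1,\ldots,A_{p/r}$ form a uniformly random equal-partition of $[p]$. Equivalently, the vector of Byzantine-counts per block arises from sampling without replacement, which is NA by the classical permutation-distribution result of Joag-Dev--Proschan. Since each $X_j$ is a nondecreasing function of the Byzantine count in a single (disjoint) block, the standard closure properties of NA imply that $\{X_j\}$ is itself NA. In particular, for every $t>0$,
\[
\E\Bigl[\prod_j e^{tX_j}\Bigr] \leq \prod_j \E\bigl[e^{tX_j}\bigr].
\]

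Second, setting $p_j := \mathbb{P}[X_j = 1]$ and $\mu := \E[\hat{q}] = \sum_j p_j$, the elementary bound $1 + p_j(e^t - 1) \leq \exp(p_j(e^t-1))$ yields $\E[e^{t\hat{q}}] \leq \exp(\mu(e^t - 1))$. Applying Markov's inequality to $e^{t\hat{q}}$ then gives, for every $t>0$,
\[
\mathbb{P}[\hat{q} \geq (1+\theta)\mu] \leq \exp\bigl(\mu(e^t - 1 - t(1+\theta))\bigr).
\]

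Third, rather than optimizing over $t$ (which would produce the sharper but unwieldy form $(e^\theta/(1+\theta)^{1+\theta})^{\mu}$), I would make the specific choice $t = \ln(1 + \theta/2)$. This reduces the exponent to $\mu\bigl(\theta/2 - (1+\theta)\ln(1+\theta/2)\bigr)$, so the stated bound follows from the scalar inequality $\theta/2 \leq (1+\theta/2)\ln(1+\theta/2)$, equivalently $s \leq (1+s)\ln(1+s)$ with $s = \theta/2 \geq 0$. This is routine: both sides vanish at $s = 0$, and the right-side derivative $1 + \ln(1+s)$ dominates the left-side derivative $1$ throughout $s \geq 0$. Exponentiating yields precisely $(1+\theta/2)^{-\mu\theta/2}$, matching the claim.

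The main obstacle is the first step: the $X_j$'s are visibly not independent, so some structural tool is needed before the MGF factors. Negative association is the natural choice here, and the closure properties we need (disjointness of the blocks plus monotonicity of each $X_j$ in a single block's count) are clean. Once NA is in hand, the remainder is a textbook Chernoff--Markov computation with a hand-picked $t$ engineered to produce exactly the form appearing in the lemma.
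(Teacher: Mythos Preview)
Your proof is correct and takes a genuinely different route from the paper. The paper does not run a Chernoff argument at all: instead it invokes a combinatorial tail bound of Linial--Pelekis,
\[
\mathbb{P}\!\left[\sum_i X_i \geq \beta\hat{p}\right] \;\leq\; \frac{1}{\binom{\beta\hat{p}}{k}}\sum_{|A|=k}\mathbb{P}\bigl[\land_{i\in A}(X_i=1)\bigr],
\]
bounds the joint probability on the right by $(\E[\hat q]/\hat p)^k$ via an informal ``conditioning on $X_i=1$ only depletes the Byzantine pool'' argument, and then sets $k=\E[\hat q]\,\theta/2$ to produce exactly $(1+\theta/2)^{-\E[\hat q]\theta/2}$ after elementary binomial estimates. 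Your approach replaces this specialized lemma with the standard NA $+$ MGF machinery: the Joag-Dev--Proschan permutation result plus closure under monotone functions on disjoint blocks gives the MGF factorization cleanly, and the hand-picked $t=\ln(1+\theta/2)$ together with the scalar inequality $s\le(1+s)\ln(1+s)$ recovers the stated form. What your route buys is self-containment and a more rigorous treatment of the dependence (you invoke a named theorem rather than an intuitive monotonicity claim); what the paper's route buys is that it only needs the $k$-wise product inequality $\mathbb{P}[\land_{i\in A}X_i=1]\le\prod_{i\in A}\mathbb{P}[X_i=1]$ rather than full negative association, though in this model both hold.
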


Now, using Lemma \ref{lemma:3} and assuming $\theta\geq 2$,
\begin{align*}
\mathbb{P}\left[\hat{q} \geq \E[\hat{q}](1 + \theta)\right] \leq \left(\dfrac{1}{1+\theta/2}\right)^{ \E[\hat{q}] \theta/2}\\
\implies \mathbb{P}\left[\hat{q} \geq 1 + \E[\hat{q}] \theta\right] \leq \left(\dfrac{1}{1+\theta/2}\right)^{ \E[\hat{q}] \theta/2}\\
\implies \mathbb{P}\left[\hat{q} \geq 1 + \E[\hat{q}] \theta\right] \leq 2^{ -\E[\hat{q}] \theta/2}\\
\end{align*}
where we used the fact that $\E[\hat{q}] \leq 1$ in the first implication and the assumption that $\theta \geq 2$ in the second. Setting $\delta:=2^{ -\E[\hat{q}] \theta/2}$, we get the probability bound. Finally, setting $\delta \leq 1/2$ makes $\theta\geq 2$, which completes the proof.

\subsection{Proof of Lemma \ref{lemma:3}}


We will prove the following: 
$$P\left[\hat{q} \geq \E[\hat{q}](1 + \theta)\right] \leq  \left(\dfrac{1}{1+\dfrac{\theta}{2}}\right)^{ \E[\hat{q}] \theta/2}$$

\begin{proof}

We will use the following theorem for this proof \cite{linial14,pelekis17}.

\begin{theorem*}[Linial~\cite{linial14}]\label{Linial}
Let $X_1,\ldots,X_{\hat{p}}$ be Bernoulli $0/1$ random variables. Let $\beta\in (0,1)$ be such that $\beta \hat{p}$ is a positive integer and let $k$ be 
any positive integer such that $0< k < \beta \hat{p}$. Then 
$$ \mathbb{P}\left[\sum_{i=1}^{\hat{p}}X_i\geq \beta \hat{p}\right] \leq \frac{1}{\binom{\beta \hat{p}}{k}} \sum_{|A|=k} \mathbb{P}\left[\land_{i\in A}(X_i=1)\right] $$
\end{theorem*}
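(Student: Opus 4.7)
The plan is to apply Linial's theorem with carefully chosen parameters $\beta\hat p$ and $k$, combine it with a negative-correlation estimate on the joint ``all Byzantine-heavy'' events, and then balance the parameters so that the resulting binomial ratio collapses to the stated Chernoff-type form. Throughout I will write $\mu := \PP[X_1=1]$ so that by symmetry $\E[\hat q] = \mu\hat p$.

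First, because $\hat q$ is integer-valued, the event $\{\hat q\geq \mu\hat p(1+\theta)\}$ equals $\{\hat q\geq m\}$ for $m := \lceil \mu\hat p(1+\theta)\rceil$. I will then apply Linial's theorem to this $m$ with the integer parameter $k := \lfloor \mu\hat p\theta/2\rfloor$, verifying the required side condition $0<k<m$ (which holds once $\mu\hat p\theta\geq 2$; the complementary regime makes the asserted bound trivial since the right-hand side is at least $1$).

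Second, I will reduce the joint probability to a product of marginals. The key structural fact is that the block indicators $\{X_i\}$ are negatively associated: the Byzantine labels are placed uniformly at random into the $p$ node slots, so the vector of Byzantine counts across disjoint blocks $A_1,\ldots,A_{\hat p}$ is multivariate hypergeometric and hence negatively associated by Joag-Dev–Proschan. Since each $\{X_i=1\}$ is an increasing function of the Byzantine count in block $A_i$, for every $k$-subset $A$ of blocks,
\[
\PP\!\left[\bigwedge_{i\in A}(X_i=1)\right] \;\leq\; \prod_{i\in A}\PP[X_i=1] \;=\; \mu^{|A|}.
\]
Summing over the $\binom{\hat p}{k}$ subsets and invoking Linial's theorem yields
\[
\PP[\hat q\geq m] \;\leq\; \frac{\binom{\hat p}{k}\,\mu^{k}}{\binom{m}{k}}.
\]

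Third, I will estimate the binomial ratio elementarily. Using $\binom{\hat p}{k}\leq \hat p^{k}/k!$ and $\binom{m}{k}\geq (m-k+1)^{k}/k!$, the ratio is at most $\bigl(\mu\hat p/(m-k+1)\bigr)^{k}$. Substituting $m\geq \mu\hat p(1+\theta)$ and $k\leq \mu\hat p\theta/2$ gives $m-k+1>\mu\hat p(1+\theta/2)$, so the ratio is bounded by $(1+\theta/2)^{-k}$. Taking $k$ as close to $\mu\hat p\theta/2$ as integrality allows and absorbing a single rounding factor into the already strict inequality delivers the target bound $(1+\theta/2)^{-\mu\hat p\theta/2}$.

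The main obstacle is the negative-association step: everything else is a routine Markov plus binomial estimate once Linial's inequality is in hand. Although the negative correlation among the $X_i$ is intuitive (concentrating Byzantine labels in one block leaves fewer for the others), a rigorous justification requires either an explicit appeal to the Joag-Dev–Proschan theorem for the multivariate hypergeometric distribution or a direct double-counting argument that compares joint allocations to the product of marginals. A secondary nuisance is integrality of $m$ and $k$, which is handled by the ceilings/floors above and does not affect the final exponent.
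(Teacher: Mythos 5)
Your proposal does not prove the stated theorem: it takes Linial's inequality as given and uses it to derive the downstream tail bound $\mathbb{P}\left[\hat q \geq \mathbb{E}[\hat q](1+\theta)\right] \leq (1+\theta/2)^{-\mathbb{E}[\hat q]\theta/2}$. That derivation is essentially the paper's proof of Lemma \ref{lemma:3} (same choice $k \approx \mathbb{E}[\hat q]\theta/2$, same bound on the joint probabilities by the product of marginals, same binomial-ratio estimate), and your justification of the negative-correlation step via negative association of the multivariate hypergeometric block counts is in fact cleaner than the paper's informal argument. But the statement you were asked to prove is the combinatorial inequality itself, which holds for arbitrary, possibly dependent, Bernoulli variables; nothing in your write-up establishes it, so as a proof of the assigned statement the argument is circular.

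The missing proof is short. Let $S=\sum_{i=1}^{\hat p}X_i$ and consider the random count of all-ones $k$-subsets,
\[
N_k \;:=\; \sum_{|A|=k}\ \prod_{i\in A}X_i \;=\; \binom{S}{k},
\]
with the convention $\binom{s}{k}=0$ for $s<k$. Linearity of expectation gives $\mathbb{E}[N_k]=\sum_{|A|=k}\mathbb{P}\left[\bigwedge_{i\in A}(X_i=1)\right]$. Since $s\mapsto\binom{s}{k}$ is nonnegative and nondecreasing, on the event $\{S\geq \beta\hat p\}$ we have $N_k\geq\binom{\beta\hat p}{k}$, hence $N_k\geq \binom{\beta\hat p}{k}\,\mathbf{1}\{S\geq\beta\hat p\}$ pointwise. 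Taking expectations and dividing by $\binom{\beta\hat p}{k}$, which is positive because $0<k<\beta\hat p$, yields the claim. Note that no independence or correlation structure is needed here; the negative-association input you describe belongs to the application of the theorem (where the paper, too, invokes it), not to the theorem's proof.
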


Let $\beta \hat{p} = \mathbb{E}[\hat{q}](1+\theta)$. Now, $\mathbb{P}[X_i=1] = \E[X_i] = \E[\hat{q}]/\hat{p}$.
We will show that
$$\mathbb{P}\left[\land_{i\in A}(X_i=1)\right]  \leq (\E[\hat{q}]/\hat{p})^k$$ 
where $A \subseteq \{1, \dots, \hat{p}\} $ of size $k$. To see this, note that for any $i$, $\mathbb{P}[X_i=1]  = \E[\hat{q}]/\hat{p}$. The conditional probability of some other $X_j$ being $1$ given that $X_i$ is $1$ would only reduce. Formally, for $i \neq j$,
$$\mathbb{P}[X_j=1|X_i=1] \leq \mathbb{P}[X_i=1] = \epsilon\gamma.$$
Note that for $X_i$ to be $1$, the Byzantine machines in the $i$-th block must be in the majority. Hence, the reduction in the pool of leftover Byzantine machines was more than honest machines. Since the total number of Byzantine machines is less than the number of honest machines, the probability for them being in a majority in block $j$ reduces. Therefore,

\begin{align*}
\mathbb{P}\left[\sum_{i=1}^{\hat{p}}X_i \geq \mathbb{E}[\hat{q}](1 + \theta)\right]&\leq \dfrac{\displaystyle\binom{\hat{p}}{k}}{\displaystyle\binom{\mathbb{E}[\hat{q}](1 + \theta)}{k}} \mathbb{P}\left[\land_{i\in A}(X_i=1)\right]&\\
&\leq \dfrac{\displaystyle\binom{\hat{p}}{k}}{\displaystyle\binom{\mathbb{E}[\hat{q}](1 + \theta)}{k}} (\E[\hat{q}]/\hat{p})^k &\\
&\leq \dfrac{\displaystyle(\hat{p})^k}{\displaystyle k!\binom{\mathbb{E}[\hat{q}](1 + \theta)}{k}} \left(\dfrac{\E[\hat{q}]}{\hat{p}}\right)^k &.\end{align*}

Letting $k = \mathbb{E}[\hat{q}]\theta/2$, we then have
\begin{align*}
\mathbb{P}\left[\sum_{i=1}^{\hat{p}}X_i \geq \mathbb{E}[\hat{q}](1 + \theta)\right]&\leq \frac{(\hat{p})^k}{(\mathbb{E}[\hat{q}](1 + \theta/2))^k} (\E[\hat{q}]/\hat{p})^k &\\
&= \left(\frac{1}{1+\frac{\theta}{2}}\right)^{\mathbb{E}[\hat{q}]\theta/2} &
\end{align*}

\end{proof}

\subsection{Proof of Theorem \ref{prop:MoM}}
We will adapt the techniques of Theorem 3.1 in \cite{minsker2015geometric}.
\begin{lemma}[\cite{minsker2015geometric}, Lemma 2]
Let $\mathbb{H}$ be some Hilbert space, and for $x_1,\dots, x_k \in \mathbb{H}$, let $x_{gm}$ be their geometric median. Fix $\alpha \in (0, \frac{1}{2})$ and suppose that $z \in \mathbb{H}$ satisfies $\|x_{gm} - z\| > C_\alpha r$, where
\begin{equation*}
C_\alpha = (1-\alpha)\sqrt{\dfrac{1}{1-2\alpha}}
\end{equation*}
and $r>0$. Then there exists $J \subseteq \{1, \dots, k\}$ with $|J|>\alpha k$ such that for all $j\in J$, $\|x_j-z\|>r$.
\end{lemma}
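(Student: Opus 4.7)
The plan is to prove the contrapositive. Let $J = \{j : \|x_j - z\| > r\}$ and suppose $|J| \leq \alpha k$; I will show $\rho := \|z - x_{gm}\| \leq C_\alpha r$. The case $\rho \leq r$ is trivial because $C_\alpha \geq 1$ on $(0, 1/2)$, so assume $\rho > r$.

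First, I would exploit the first-order optimality of $x_{gm}$. The Fermat condition for the minimizer of the convex function $F(y) = \sum_j \|y - x_j\|$ gives subgradients $v_j$ with $\|v_j\| \leq 1$ and $\sum_j v_j = 0$; for $x_j \neq x_{gm}$ these take the explicit form $v_j = (x_{gm} - x_j)/\|x_{gm} - x_j\|$. Setting $a = z - x_{gm}$ and $b_j = x_j - x_{gm}$, the identity $\sum_j \langle v_j, a \rangle = 0$ is the basis of everything that follows.

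Second, I would bound each $\langle v_j, a \rangle$. For $j \in J^c$ the proximity $\|b_j - a\|^2 = \|x_j - z\|^2 \leq r^2$ expands (by Hilbert-space geometry) to $2\langle a, b_j \rangle \geq \rho^2 + \|b_j\|^2 - r^2$, hence
$$\langle v_j, a \rangle \;=\; -\frac{\langle a, b_j \rangle}{\|b_j\|} \;\leq\; -\frac{\rho^2 + \|b_j\|^2 - r^2}{2\|b_j\|} \;\leq\; -\sqrt{\rho^2 - r^2},$$
where the last inequality is AM--GM in the variable $\|b_j\|$ (equality at $\|b_j\| = \sqrt{\rho^2 - r^2}$). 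For $j \in J$ (which absorbs any degenerate $x_j = x_{gm}$, since these force $\|x_j - z\| = \rho > r$), Cauchy--Schwarz with $\|v_j\| \leq 1$ gives $\langle v_j, a \rangle \leq \rho$.

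Third, summing yields $0 \leq |J|\rho - |J^c|\sqrt{\rho^2 - r^2}$, so $(1-\alpha)\sqrt{\rho^2 - r^2} \leq \alpha \rho$; squaring and rearranging produces $(1 - 2\alpha)\rho^2 \leq (1-\alpha)^2 r^2$, i.e.\ $\rho \leq (1-\alpha) r / \sqrt{1-2\alpha} = C_\alpha r$, which contradicts any prior assumption $\rho > C_\alpha r$. The main obstacle is step two: a crude triangle-inequality bound such as $\|b_j\| \leq \rho + r$ would yield only $\rho \leq r/(1-2\alpha)$, which is strictly weaker than $C_\alpha r$. Recovering the sharp Hilbert-space constant requires the AM--GM minimization of $(\rho^2 + \|b_j\|^2 - r^2)/(2\|b_j\|)$ in $\|b_j\|$, which is precisely the step that exploits the inner-product geometry rather than just the metric structure.
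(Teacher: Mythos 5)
Your proof is correct, and it is essentially the original argument of Minsker (Lemma 2.1 of the cited reference): the paper itself states this lemma without proof, importing it by citation, so your derivation fills in exactly the argument the source uses --- first-order optimality of the geometric median ($\sum_j v_j = 0$ with $\|v_j\|\le 1$), the inner-product expansion of $\|x_j - z\|^2 \le r^2$ for the close points, the AM--GM minimization over $\|b_j\|$ giving $\langle v_j, a\rangle \le -\sqrt{\rho^2 - r^2}$, and the final count $(1-\alpha)\sqrt{\rho^2-r^2}\le\alpha\rho$. Your handling of the degenerate case $x_j = x_{gm}$ and your observation that the sharp constant $C_\alpha$ genuinely requires the Hilbert-space inner product (and is lost under a purely metric bound) are both accurate; the latter point is in fact relevant to the paper, which later applies the lemma to the coordinate-wise median viewed as a geometric median under the $\ell^1$ norm, a Banach-space setting where this Hilbert-space constant does not directly apply.
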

Note that for a general Hilbert or Banach space $\mathbb{H}$, the geometric median is defined as:
$$x_{gm} := \arg\min \sum_{j=1}^k \|x-x_j\|_{\mathbb{H}}$$
where $\|.\|_{\mathbb{H}}$ is the norm on $\mathbb{H}$. This coincides with the notion of geometric median in $\mathbb{R}^2$ under the $\ell_2$ norm. Note that Coordinatewise Median is the Geometric Median in the real space with the $\ell^1$ norm, which forms a Banach space.

Firstly, we use Corollary \ref{lemma:5} to see that with probability $1-\delta$, $\hat{q}\leq1 + 2 \log(1/\delta)$. Now, we assume that $\hat{q}\leq1 + 2 \log(1/\delta)$ is true. Conditioned on this event, we will show the remainder of the theorem holds with probability at least $1-\delta$. Hence, with total probability at least $(1-\delta)^2\geq 1-2\delta$, the statement of the theorem holds.

\textbf{(1):}~Let us assume that number of clusters is $k = 128 \log 1/\delta$ for some $\delta < 1$, also note that because $\delta \in[0,1/2]$, we have that $k=128 \log 1/\delta \geq 64 (0.5 + \log 1/\delta)  \geq 8\hat{q}$. Now, choose $\alpha = 1/4$. Choose $r=4\sigma \sqrt{\frac{k}{b}}$. Assume that the Geometric Median is more than $C_\alpha r$ distance away from true mean. Then by the previous Lemma, atleast $\alpha = 1/4$ fraction of the empirical means of the clusters must lie atleast $r$ distance away from true mean. Because we assume the number of clusters is more than $8\hat{q}$, atleast $1/8$ fraction of empirical means of uncorrupted clusters must also lie atleast $r$ distance away from true mean.

Recall that the variance of the mean of an ``honest'' vote group is given by
$$(\sigma')^2=\sigma^2\dfrac{k}{b}.$$
By applying Chebyshev's inequality to the $i^{th}$ uncorrupted vote group $G[i]$, we find that its empirical mean $\hat{x}$ satisfies 
$$\mathbb{P}\left(\|G[i]-G\| \geq 4\sigma \sqrt{\dfrac{k}{b}}\right)\leq \dfrac{1}{16}.$$

Now, we define a Bernoulli event that is 1 if the empirical mean of an uncorrupted vote group is at distance larger than $r$ to the true mean, and 0 otherwise. By the computation above, the probability of this event is less than $1/16$. Thus, its mean is less than $1/16$ and we want to upper bound the probability that empirical mean is more than $1/8$. Using the number of events as $k = 128 \log(1/\delta)$, we find that this holds with probability at least $1-\delta$. For this, we used the following version of Hoeffding's inequality in this part and part (3) of this proof. For Bernoulli events with mean $\mu$, empirical mean $\hat{\mu}$, number of events $m$ and deviation $\theta$:
$$\mathbb{P}(\hat{\mu}-\mu\geq\theta)\leq \exp(-2m\theta^2)$$
 To finish the proof, just plug in the values of $C_\alpha$ given in the Lemma 2.1 (written above) from \cite{minsker2015geometric}, where $C_\alpha = 3/2\sqrt{2}$ for Geometric Median.

\textbf{(2):}~For coordinate-wise median, we set $k = 128 \log d/\delta$. Then we apply the result proved in previous part for each dimension of $\hat{G}$. Then, we get that with probability at least $1-\delta/d$, 
$$|\hat{G}_i-G_i| \leq C_1  \sigma_i \sqrt{\frac{\log d/\delta}{b}}$$
where $\hat{G}_i$ is the $i^{th}$ coordinate of $\hat{G}$, ${G}_i$ is the $i^{th}$ coordinate of ${G}$ and $\sigma_i^2$ is the $i^{th}$ diagonal entry of $\Sigma$. Doing a union bound, we get that with probability at least $1-\delta/d$
$$\|\hat{G}-G\| \leq C_1 \sigma \sqrt{\frac{\log d/\delta}{b}}.$$

\textbf{(3):}~Define
$$\Delta_i = \sigma_i \sqrt{\dfrac{k}{b\sqrt{\dfrac{1}{2k}\log \dfrac{d}{\delta}}}}$$
where $\sigma_i^2$ is the $i^{th}$ diagonal entry of $\Sigma$. Now, for each uncorrupted vote group, using Chebyshev's inequality:
$$\mathbb{P}\left (|\hat{G}_i-G_i|\geq \Delta_i\right)\leq \sqrt{\dfrac{1}{2k}\log \dfrac{d}{\delta}}.$$

Now, $i^{th}$ coordinate of $\alpha$-trimmed mean lies $\Delta_i$ away from $G_i$ if atleast $\alpha k$ of the $i^{th}$ coordinates of vote group empirical means lie $\Delta_i$ away from $G_i$. Note that because of the assumption of the Proposition $\alpha k \geq 2 \hat{q}$. Because $\hat{q}$ of these can be corrupted, atleast $\alpha k/2$ of true empirical means have $i^{th}$ coordinates that lie $\Delta_i$ away from $G_i$. This means $\alpha/2$ fraction have true empirical means have $i^{th}$ coordinates that lie $\Delta_i$ away from $G_i$. Define a Bernoulli variable $X$ for a vote group as being 1 if the $i^{th}$ coordinate of empirical mean of that vote group lies more than $\Delta_i$ away from $G_i$, and 0 otherwise.

The mean of $X$ therefore satisfies
$$\E(X) < \sqrt{\dfrac{1}{2k}\log \dfrac{d}{\delta}}.$$
Set
$$\alpha = 4\sqrt{\dfrac{1}{2k}\log \dfrac{d}{\delta}}.$$
Again, using Hoeffding's inequality in a manner analogous to part (1) of the proof, we get that probability of $i^{th}$ coordinate of $\alpha$-trimmed mean being more than $\Delta_i$ away from $G_i$ is less than $\delta/d$.

Taking union bound over all $d$ coordinates, we find that the probability of $\alpha$-trimmed mean being more than
$$\sigma \sqrt{\dfrac{k}{b\sqrt{\dfrac{1}{2k}\log \dfrac{d}{\delta}}}}=\sigma \sqrt{\dfrac{4k}{b\alpha}}$$
away from $G$ is less than $\delta$. Hence we have proved that if
$$\alpha = 4\sqrt{\dfrac{1}{2k}\log \dfrac{d}{\delta}}$$
and $\alpha k \geq 2 \hat{q}$, then with probability at least $1-\delta$, $\Delta \leq \sigma \sqrt{\dfrac{4k}{b\alpha}} $. Now, set $\alpha=1/4$ and $k=128 \log (d/\delta)$. One can easily see that $\alpha k \geq 2 \hat{q}$ is satisfied and we get that with probability at least $1-\delta$, for some constant $C_3$,
$$\Delta \leq C_3 \sigma \sqrt{\dfrac{\log (d/\delta)}{b}}.$$

	\section{Extra Experimental Details}\label{appendix:extraExps}
\subsection{Implementation and system-level optimization details}\label{appendix:implem}
We introduce the details of combining \bulyan{}, \multikrum{}, and coordinate-wise median with \dracolite{}.
\begin{itemize}
	\item \bulyan{}: according to \cite{mhamdi2018hidden} \bulyan{} requires $p\geq 4q+3$. In \dracolite{}, after the first majority voting level, the corresponding requirement in \bulyan{} becomes $\frac{p}{r} \geq 4 \hat q +3 = 11$. Thus, we assign all ``winning" gradients in to one cluster \ie, \bulyan{} is conducted across 15 gradients.
	\item \multikrum{}: according to \cite{blanchard17}, \multikrum{} requires $p\geq 2q+3$. Therefore, for similar reason, we assign 15 ``winning" gradients into two groups with uneven sizes at 7 and 8 respectively.
	\item coordinate-wise median: for this baseline we follow the theoretical analysis in Section \ref{section:reduceByzantine} \ie, 15 ``winning" gradients are evenly assigned to 5 clusters with size at 3 for \textit{reverse gradient} Byzantine attack. For ALIE attack, we assign those 15 gradients evenly to 3 clusters with size of 5. The reason for this choice is simply that we observe the reported strategies perform better in our experiments. Then mean of the gradients is calculated in each cluster. Finally, we take coordinate-wise median across means of all clusters.
\end{itemize}
One important thing to point out is that we conducted system level optimizations on implementing \multikrum{} and \bulyan{}, \eg, parallelizing the computationally heavy parts in order to make the comparisons more fair according to \cite{aggregathor2019}. The main idea of our system-level optimization are two-fold: i) gradients of all layers of a neural network are firstly vectorized and concatenated to a high dimensional vector. Robust aggregations are then deployed on those high dimensional gradient vectors from all compute nodes. ii) As computational heavy parts exist for several methods \eg, calculating medians in the second stage of \bulyan{}. To optimize that part, we chunk the high dimensional gradient vectors evenly into pieces, and parallelize the median calculations in all the pieces. Our system-level optimization leads to 2-4 $\times$ speedup in the robust aggregation stage.

\subsection{Hyper-parameter tuning}\label{appendix:paramTun}
\begin{table}[ht]
	\centering
	\ra{1.3}
	\caption{Tuned stepsize schedules for experiments under \textit{reverse gradient} Byzantine attack}
	\begin{tabular}{ccc}
		\toprule Experiments
		& CIFAR-10 on ResNet-18 & CIFAR-100 on VGG13-BN \bigstrut\\
		\midrule
		D-\multikrum{} & 0.1 & 0.1 \bigstrut\\
		D-\bulyan{} & 0.1 & 0.1  \bigstrut\\
		D-Med. & $0.1\times 0.99^{t\pmod{10}}$ & $0.1\times 0.99^{t\pmod{10}}$  \bigstrut\\
		\multikrum{} & 0.03125 & 0.03125  \bigstrut\\
		\bulyan{} & 0.1 & 0.1 \bigstrut\\
		Med. & 0.1 & $0.1\times 0.995^{t\pmod{10}}$  \bigstrut\\
		\bottomrule
	\end{tabular}%
	\label{tab:stepsizeRevGrad}%
\end{table}%

\begin{table}[ht]
	\centering
	\ra{1.3}
	\caption{Tuned stepsize schedules for experiments under ALIE Byzantine attack}
	\begin{tabular}{ccc}
		\toprule Experiments
		& CIFAR-10 on ResNet-18 & CIFAR-100 on VGG13-BN \bigstrut\\
		\midrule
		D-\multikrum{} & $0.1\times 0.98^{t\pmod{10}}$ & $0.1\times 0.965^{t\pmod{10}}$ \bigstrut\\
		D-\bulyan{} & $0.1\times 0.99^{t\pmod{10}}$ & $0.1\times 0.965^{t\pmod{10}}$  \bigstrut\\
		D-Med. & $0.1\times 0.98^{t\pmod{10}}$ & $0.1\times 0.98^{t\pmod{10}}$  \bigstrut\\
		\multikrum{} & $0.0078125\times 0.96^{t\pmod{10}}$ & $0.00390625\times 0.965^{t\pmod{10}}$  \bigstrut\\
		\bulyan{} & $0.001953125\times 0.95^{t\pmod{10}}$ & $0.00390625\times 0.965^{t\pmod{10}}$ \bigstrut\\
		Med. & $0.001953125\times0.95^{t\pmod{10}}$ & $0.001953125\times0.965^{t\pmod{10}}$  \bigstrut\\
		\bottomrule
	\end{tabular}%
	\label{tab:stepsizeALIE}%
\end{table}%

\subsection{Data augmentation and normalization details}\label{appendix:dataPreprocess}
In preprocessing the images in CIFAR-10/100 datasets, we follow the standard data augmentation and normalization process. For data augmentation, random cropping and horizontal random flipping are used. Each color channels are normalized with mean and standard deviation by $\mu_r = 0.491372549, \mu_g = 0.482352941, \mu_b =  0.446666667$, $\sigma_r = 0.247058824, \sigma_g = 0.243529412, \sigma_b = 0.261568627$. Each channel pixel is normalized by subtracting the mean value in this color channel and then divided by the standard deviation of this color channel.


\subsection{Comparison between \dracolite{} and \textsc{Draco}} \label{appendix:DracoComparisons}
We provide the experimental results in comparing  \dracolite{} with \textsc{Draco}.

\begin{figure*}[htp]
	\centering      
	\subfigure[ResNet-18 on CIFAR-10]{\includegraphics[width=0.4\textwidth]{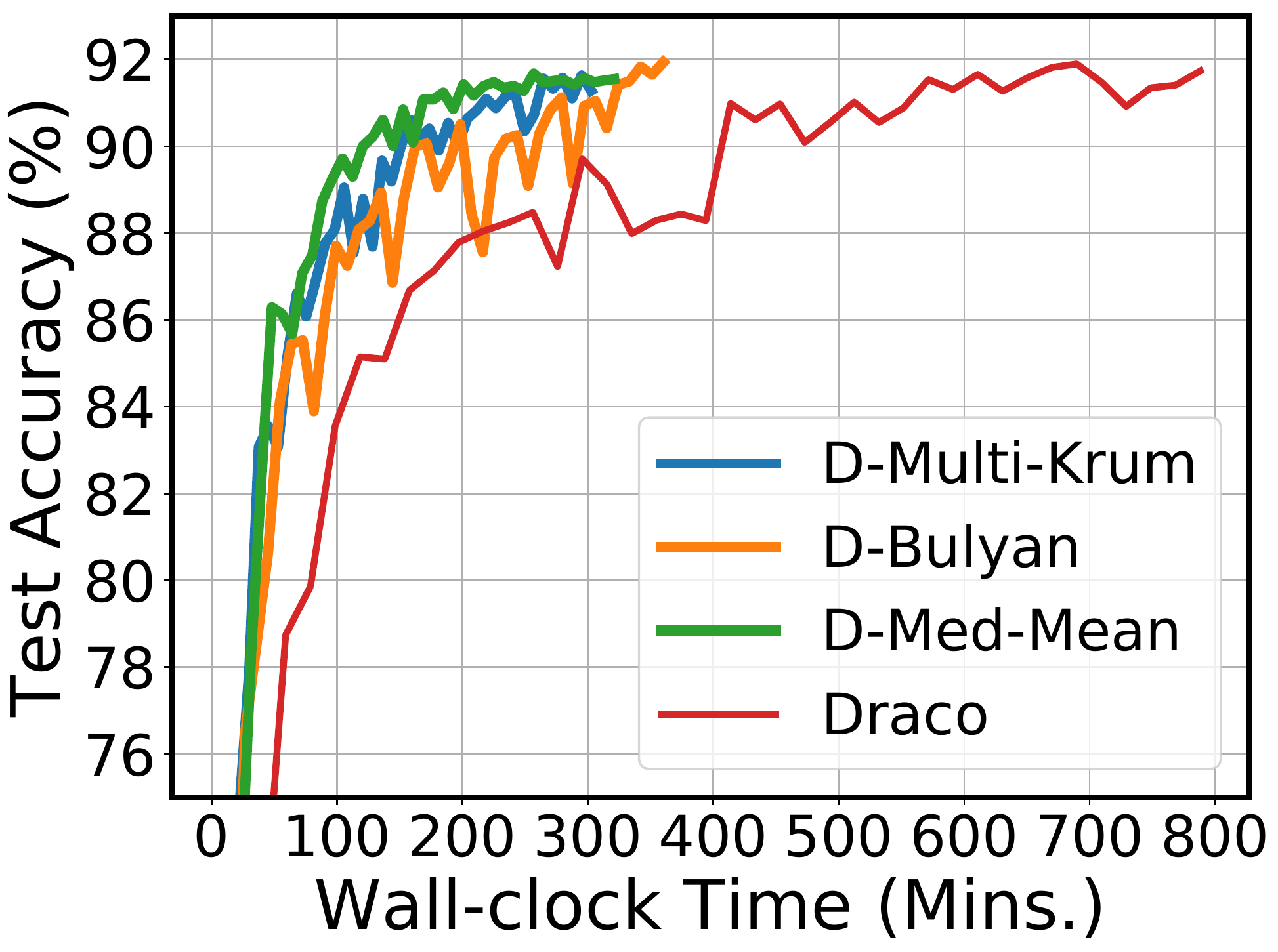}}
	\subfigure[VGG13-BN on CIFAR-100]{\includegraphics[width=0.4\textwidth]{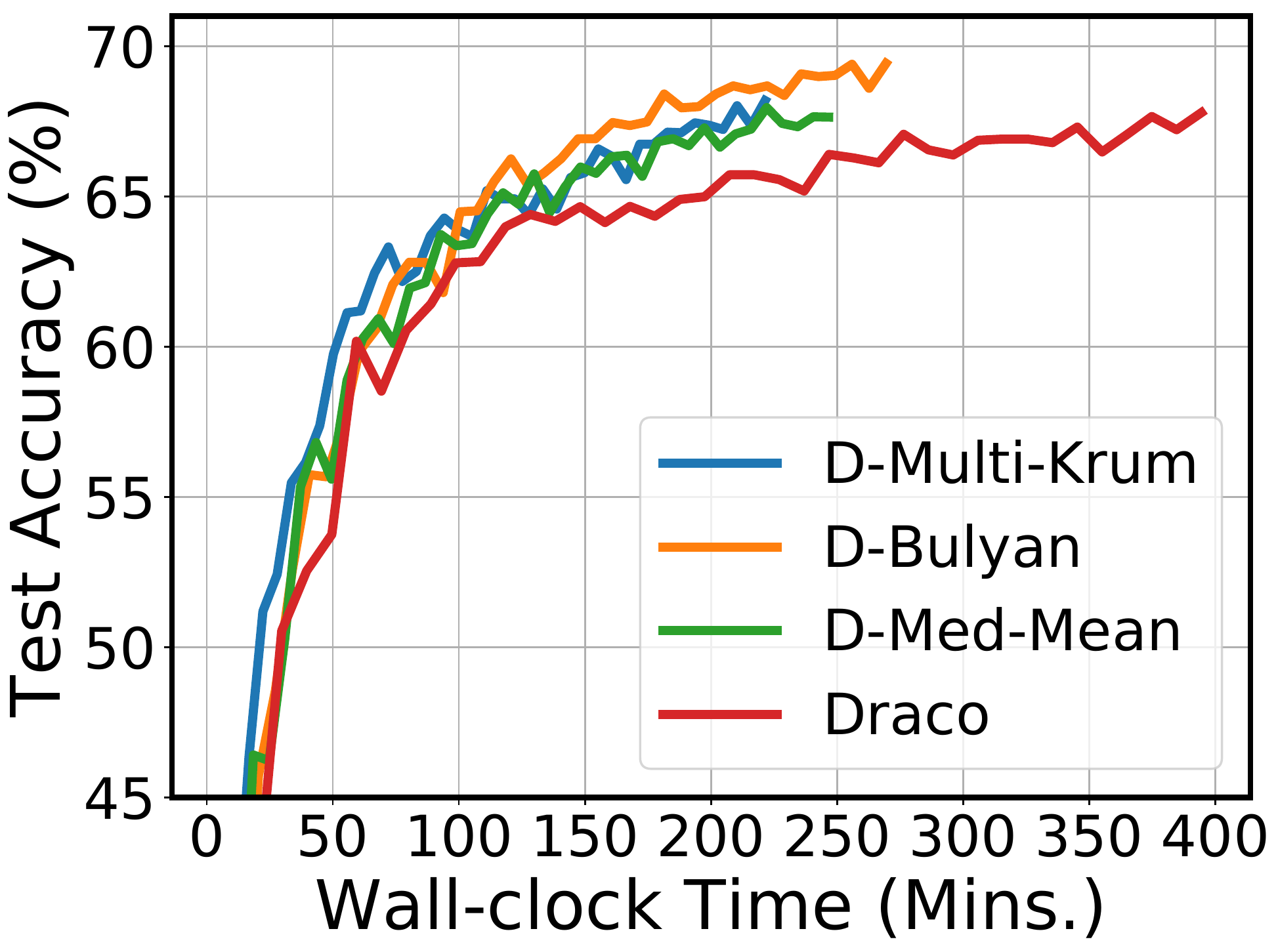}}
	\caption{Convergence with respect to runtime comparisons among \dracolite{} back-ended robust aggregation methods and \textsc{Draco} under \textit{reverse gradient} Byzantine attack on different dataset and model combinations: (a) ResNet-18 trained on CIFAR-10 dataset; (b) VGG13-BN trained on CIFAR-100 dataset}
	\label{fig:appendixCompareDracoConvergence}
\end{figure*}

\begin{figure*}[htp]
	\centering      
	\subfigure[ResNet-18 on CIFAR-10]{\includegraphics[width=0.4\textwidth]{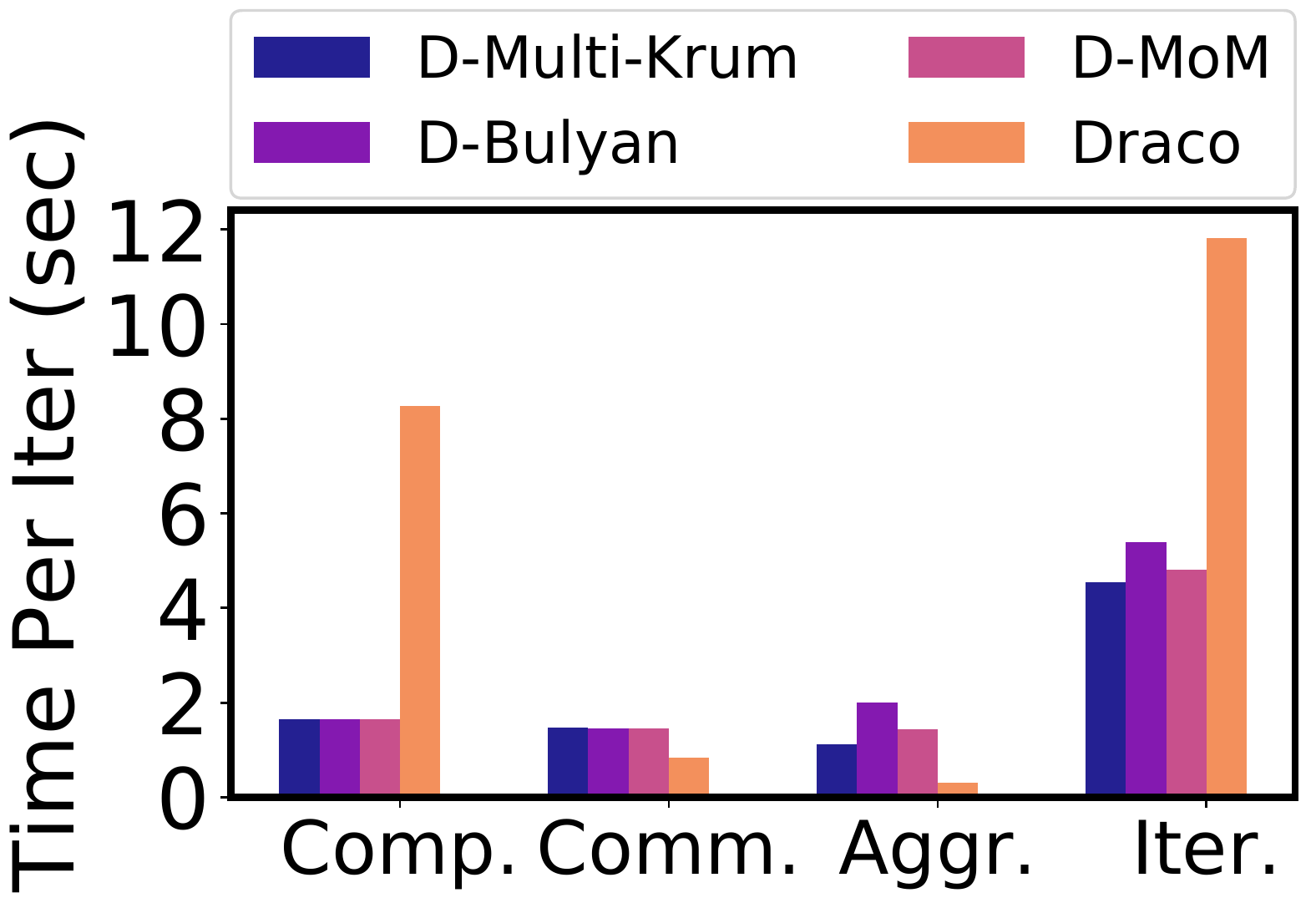}}
	\subfigure[VGG13-BN on CIFAR-100]{\includegraphics[width=0.4\textwidth]{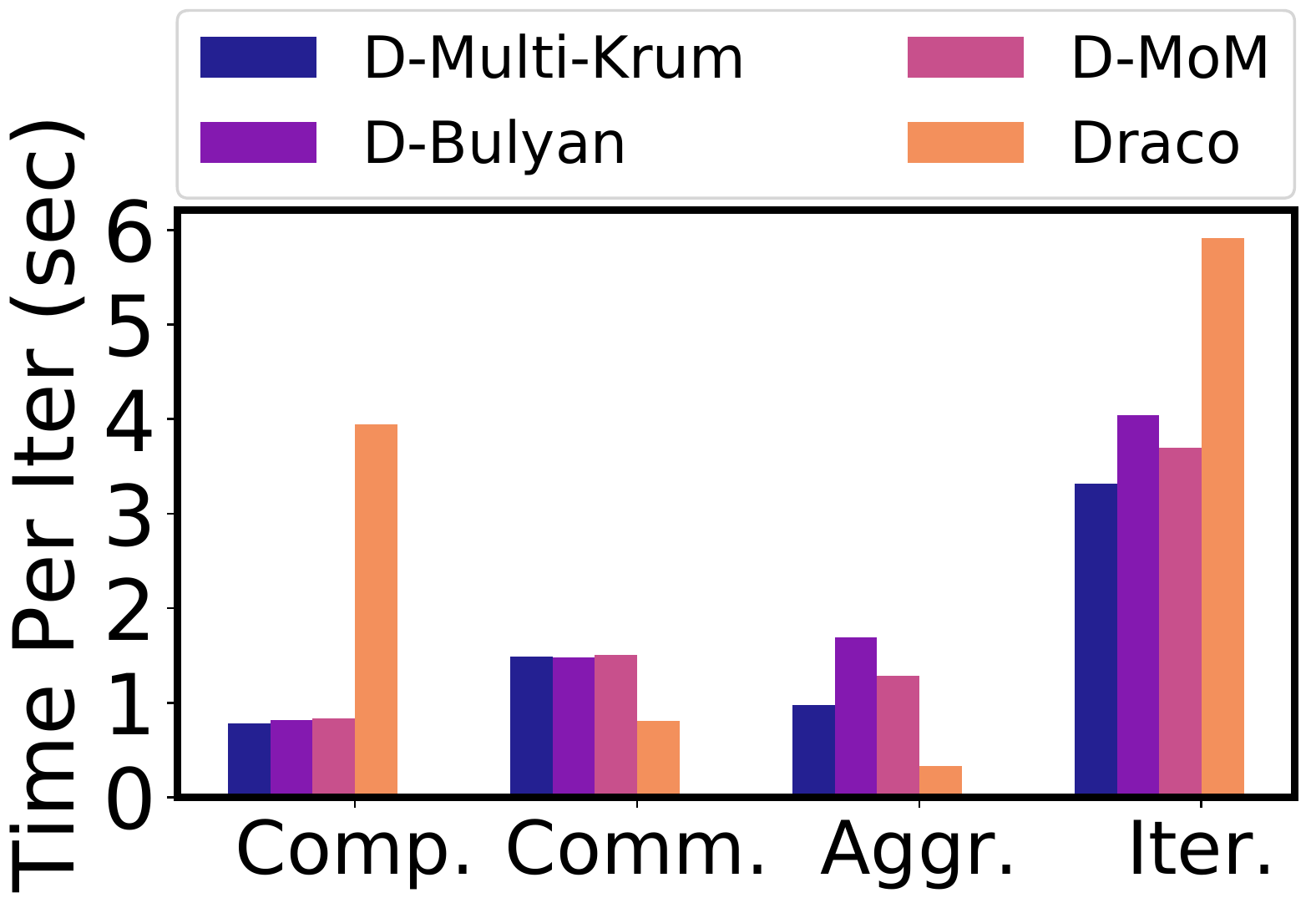}}
	\caption{Convergence with respect to runtime comparisons among \dracolite{} back-ended robust aggregation methods and \textsc{Draco} under \textit{reverse gradient} Byzantine attack on different dataset and model combinations: (a) ResNet-18 trained on CIFAR-10 dataset; (b) VGG13-BN trained on CIFAR-100 dataset}
	\label{fig:appendixCompareDracoIter}
\end{figure*}

\end{document}